\newif\ifprivate                % print private comments
\newif\ifconf                   % conference or journal versus techreport style
\newif\ifshort                  % short version
\newif\ifcompress               % compressed style
\newif\ifuc                     % parts under construction
\newif\iffinal                  % final versus first submission
\def\eqvsp{}  \newdimen\paravsp  \paravsp=1.3ex % named paragraph spacing
\def\eqvsp{\vspace{-0.5ex}}
\def\eqvsp{}  \newdimen\paravsp  \paravsp=1.3ex % named paragraph spacing
\def\,{\mskip 3mu} \def\>{\mskip 4mu plus 2mu minus 4mu} \def\;{\mskip 5mu plus 5mu} \def\!{\mskip-3mu}
\def\dispmuskip{\thinmuskip= 3mu plus 0mu minus 2mu \medmuskip=  4mu plus 2mu minus 2mu \thickmuskip=5mu plus 5mu minus 2mu}
\def\textmuskip{\thinmuskip= 0mu                    \medmuskip=  1mu plus 1mu minus 1mu \thickmuskip=2mu plus 3mu minus 1mu}
\def\dispmuskip{}\def\textmuskip{}\fi\fi
\def\beq{\eqvsp\dispmuskip\begin{equation}}    \def\eeq{\eqvsp\end{equation}\textmuskip}
\def\beqn{\eqvsp\dispmuskip\begin{displaymath}}\def\eeqn{\eqvsp\end{displaymath}\textmuskip}
\def\bqa{\eqvsp\dispmuskip\begin{eqnarray}}    \def\eqa{\eqvsp\end{eqnarray}\textmuskip}
\def\bqan{\eqvsp\dispmuskip\begin{eqnarray*}}  \def\eqan{\eqvsp\end{eqnarray*}\textmuskip}
\newtheorem{theorem}{Theorem}
\newtheorem{lemma}[theorem]{Lemma}
\newtheorem{assumption}[theorem]{Assumption}
\newenvironment{keywords}{\centerline{\bf\small
    Keywords}\begin{quote}\small}{\par\end{quote}\vskip 1ex}
\newtheorem{myexample}[theorem]{Example} % handmade example environment
\def\paradot#1{\vspace{\paravsp plus 0.5\paravsp minus 0.5\paravsp}\noindent{\bf\boldmath{#1.}}} % boldface paragraph.
\def\eps{\varepsilon}           % small real > 0
\def\todo#1{{\color{brown}\it[[ToDo:\ #1]]}}\else\def\todo#1{}\fi % brown
\def\note#1{{\color{olive}\it[[Note:\ #1]]}}\else\def\note#1{}\fi % olive
\def\SetR{\mathbb{R}}           % Set of Real numbers
\def\B{\{0,1\}}                 % {0,1}
\def\P{{\rm P}}                 % Probability
\def\d{\delta}
\def\g{\gamma}
\def\A{{\cal A}}                % alphabet
\def\S{{\cal S}}
\def\H{{\cal H}}
\def\B{{\cal B}}
\def\O{{\cal O}}
\def\R{{\cal R}}
\def\ta{{\tilde{a}}}
\def\tb{{\tilde{b}}}
\def\th{{\tilde{h}}}
\def\ts{{\tilde{s}}}
\def\tr{{\tilde{r}}}
\def\to{{\tilde{o}}}
\def\pa{{a^\prime}}
\def\ph{{h^\prime}}
\def\ps{{s^\prime}}
\def\pr{{r^\prime}}
\def\po{{o^\prime}}
\def\DPi{\eps_\Pi}
\def\DQ{\eps_Q}
\def\piR{{\pi_R}}
\def\abs#1{\left|#1\right|}
\newcommand{\norm}[1]{\left\lVert#1\right\rVert}
\newcommand{\AxisRotator}[1][rotate=0]{%
    \tikz [x=0.25cm,y=0.60cm,line width=.2ex,-stealth,#1] \draw (0,0) arc (-150:150:1 and 1);%
}
\def\citeay#1{\citeauthor{#1} \shortcite{#1}}
\def\citeay#1{\citet{#1}}
\begin{document}
%%%%%%%%%%%%%%%%%%%%%%%%%%%%%%%%%%%%%%%%%%%%%%%%%%%%%%%%%%%%%%%
%%                    T i t l e - P a g e                    %%
%%%%%%%%%%%%%%%%%%%%%%%%%%%%%%%%%%%%%%%%%%%%%%%%%%%%%%%%%%%%%%%

\def\mytitle{Performance Guarantees for Homomorphisms Beyond Markov Decision Processes}

\ifconf

\title{\mytitle}
\author{    
    Sultan Javed Majeed$^1$,
    Marcus Hutter$^2$
    \\ 
    $^{1,2}$ Research School of Computer Science, Australian National University, Australia\\
    $^{1}$ http://www.sultan.pk,
    $^{2}$ http://www.hutter1.net
}

\maketitle

\else

\title{\vspace{-4ex}
    \iffinal\else\normalsize\sc Technical Report \hfill Draft \hfill \ifprivate With Private Comments \else Confidential \fi \fi
    \vskip 2mm\bf\Large\hrule height5pt \vskip 4mm
    \mytitle \vskip 4mm \hrule height2pt}

\author{
    \bf{Sultan Javed Majeed$^1$,
    Marcus Hutter$^2$}
    \\ [3mm]
    $^{1,2}$ Research School of Computer Science, ANU, Australia
    \iffinal\thanks{A shorter version appeared in the proceedings of the AAAI 2019 conference.}\fi
    \\
    $^{1}$ http://www.sultan.pk,
    $^{2}$ http://www.hutter1.net
}

\iffinal\date{10 November 2018}\fi % Fix final date to avoid (e.g.arXiv) recompilation producing a new date

\maketitle

\vspace*{-5ex}

\fi

\begin{abstract}
    
    Most \emph{real-world} problems have huge state and/or action spaces. Therefore, a naive application of existing \emph{tabular} solution methods is not tractable on such problems. Nonetheless, these solution methods are quite useful if an agent has access to a relatively small state-action space homomorphism of the true environment and near-optimal performance is guaranteed by the map. 
    A plethora of research is focused on the case when the homomorphism is a Markovian representation of the underlying process. However, we show that near-optimal performance is sometimes guaranteed even if the homomorphism is non-Markovian. 
    \ifshort\else Moreover, we can aggregate significantly more states by lifting the Markovian requirement without compromising on performance. In this work, we expand Extreme State Aggregation (ESA) framework to joint state-action aggregations. We also lift the policy uniformity condition for aggregation in ESA that allows even coarser modeling of the true environment. \fi
    
    % tiny top-level table of contents
    \ifconf\else\vspace{5ex}\def\contentsname{\centering\normalsize Contents}\setcounter{tocdepth}{1}
    {\parskip=-2.7ex\tableofcontents}\fi
\end{abstract}

\ifconf\else\vspace*{-2ex}
\begin{keywords} %\small
    homomorphism, state aggregation, non-MDP, action-value aggregation, reinforcement learning.
\end{keywords}
\fi

\ifconf\else\newpage
\fi

\ifprivate\it
\newpage\rm\fi

%%%%%%%%%%%%%%%%%%%%%%%%%%%%%%%%%%%%%%%%%%%%%%%%%%%%%%%%%%%%%%%
\section{Introduction}\label{sec:Intro}
%%%%%%%%%%%%%%%%%%%%%%%%%%%%%%%%%%%%%%%%%%%%%%%%%%%%%%%%%%%%%%%

% reinforcement learning
The task of learning a near-optimal behavior from a sequence of experiences can naturally be formulated as a Reinforcement Learning (RL) problem \cite{Sutton1998}. In a typical RL framework, an agent interacts with an environment by taking an action and receiving a feedback. 
\ifshort\else The agent strives to maximize the expected positive feedback from the environment over time. \fi

% curse of dimensionality
It is typically assumed that the agent is facing a small state-action space\footnote{We refer a state and action space/pair jointly as a \emph{state-action} space/pair.} Markov Decision Process (MDP) so the agent can advise a stationary policy as a function of state \cite{Puterman2014}. Unfortunately, the number of state-action pairs in most of real-world problems is prohibitively large, e.g. driving a car, playing Go, personal assistance, controlling a plant with real-valued inputs, and so forth. The agent can neither simply visit each state-action pair nor can it keep record of these visits to learn a near-optimal behavior. This explosion of state-action space is known as the \emph{curse of dimensionality} \cite{Sutton1998}. Therefore, it is essential for the agent to generalize over its experiences in such a huge state-action space problem.

% general reinforcement learning
The curse of dimensionality is not a mere artifact of limited experience and computation constraints if the problem has an infinite state-action space. In a typical General Reinforcement Learning\footnote{We formally define a typical GRL setup in Section \ref{sec:grl-setup}.} (GRL) framework the agent is faced with an environment without any known structure. The GRL setup is arguably the most general setup: it can represent MDP, $k-$MDP, partially observed MDP (POMDP) and other typical environment models \cite{Hutter2016,Leike2016b}. But this generality comes at the cost of  an infinite state-action space: every agent-environment interaction generates a \emph{unique} history. Hence, there is no other option but to consider every history as a unique state of the environment. Therefore, GRL suffers, inevitably, by the curse of dimensionality.

% formal solutions to the curse of dimensionality problem
\ifshort\else Solving the curse of dimensionality is an active field of research. A typical solution is to provide (or learn) a finite state-action space model such that the agent can perform near-optimal in the true environment. State aggregation, linear function approximation \cite{Parr2008}, neural networks --- a type of non-linear function approximation --- \cite{Sutton1998} are some well-celebrated solutions to the aforementioned problem. These methods are collectively called generalization, aggregation or dimensionality reduction methods. We are going to refer to them loosely as abstraction/aggregation procedures in this work.\fi

%state aggregation vs action abstraction
\ifshort\else Although most of the abstraction proposals concentrate on the state space reduction \cite{Abel2016,Li2006}, there is another equally important dimension of action space that hinders the application of traditional RL methods to real-world problems. The problem with a small state but huge --- sometimes continuous --- action space is equally challenging for learning and planning, {\em cf.} continuous bandit problem \cite{Bubeck2012}.  \fi

% homomorphism
A \emph{homomorphism} framework originated by \citeay{Whitt1978} is a well-studied solution to handle the state-action space curse of dimensionality. In the homomorphism framework a problem of a large state-action space is \emph{solved} by using an \emph{abstract} problem with a relatively small state-action space.
The \mbox{(near-)optimal} policy of the abstract problem is a \emph{solution} if it is also a \mbox{(near-)optimal} policy in the true environment.

%options
It is important to highlight that homomorphism is not the only technique for abstracting actions. The \emph{options} framework is a competing method for temporal action abstractions \cite{Sutton1999}. In the option/macro-action framework, the original action space is augmented with long term/built-in policies \cite{McGovern1997}. The agent using an option/marco-action commits to execute a fixed set of actions for a fixed (expected) time duration. This temporal action abstraction framework is arguably more powerful but beyond the scope of this work. Because, to the best of our knowledge, there are no theoretical performance guarantees available for such methods, and most probably such bounds might not exist.

% motivation for aggregating beyond mdp
In the homomorphism framework, it is typically assumed that the abstract problem is an MDP \cite{Ravindran2003,Ravindran2004,Taylor2008}. However, the size of the abstract state-action space can be significantly reduced if non-MDP abstractions are possible \cite{Abel2016,Li2006}. Moreover, the reduction of abstract state-action space roughly\footnote{Although reduction of state-action space is necessary for faster learning/planning but not sufficient \cite{Littman1995}.} translates into faster learning and planning \cite{Strehl2009,Lattimore2014}. 

% near-optimal solution guarantee
\ifshort\else The model of the environment can be a crude approximation of the true environment: mapping every history to a single state, a fixed discretization of continuous action space and so on. Therefore, a coarse reduction is not a direct measure of quality of the abstraction: the agent can be arbitrarily worse off in the true environment. Nevertheless, we prefer short maps, i.e. small aggregated state-action space, since the agent can learn faster as compared to a non-aggregation based agent: it has more data available to learn with the same amount of experience. But, it is only possible if the abstraction has a bounded performance loss guarantee, i.e. the agent does not loose much if it considers the aggregated model as the true environment. \fi

It has recently been shown that the MDP restriction is not a necessary condition for near-optimal performance guarantees in state-only abstractions of GRL \cite{Hutter2016}. In this work, we use similar notation and techniques of \citeay{Hutter2016} but investigate and prove optimality bounds for non-MDP state-action homomorphisms in GRL. Since state abstraction is a special case of homomorphism (the action space is not reduced/mapped), our work is a generalization of Extreme State Aggregation (ESA) \cite{Hutter2016}.

% related work
The homomorphism framework has been extended beyond MDPs to finite-state POMDPs \cite{Wolfe2010}. As mentioned earlier,  GRL has an infinite set of histories and no two histories are alike. We can represent a finite-state POMDP environment as a history-based process by imposing a structure that there is an internal MDP that generates the observations and rewards. The GRL framework, by design, is more powerful and expressive than a finite-state POMDP \cite{Wolfe2010,Leike2016b}. Therefore, our results are more general than finite-state POMDP homomorphisms.

%overview of major results
\ifshort\else \paradot{Contributions} We expand ESA performance guarantees to joint state-action aggregations that scale beyond MDPs. We make another important technical contribution by relaxing the policy uniformity condition in ESA. In ESA, the states are aggregated together if they have the same policy. We show that this requirement can be relaxed and states with approximately similar policies can also be aggregated together with little performance loss. It enables us to have near-optimal maps with, considerably, coarsely aggregated state-action pairs. \fi

%structure of the paper
\ifshort\else The rest of the paper is structured as follows. Section~\ref{sec:Preliminary} lays the foundations of our homomorphism setup. In Section~\ref{sec:Example}, we motivate the importance of non-MDP homomorphism by an example. Section~\ref{sec:key-elements} contains the key elements required to prove the main results. Section~\ref{sec:Aggregation} contains the main results of this work. In Section~\ref{sec:Disc}, we discuss and conclude the paper. 
We provide all the omitted proofs and lemmas in Appendix~\ref{sec:proofs} and \ref{sec:lemmas}, respectively. Appendix~\ref{sec:notation} provides a detailed list of notation. Section~\ref{sec:Example-2} provides another example non-MDP homomorphism.
\fi

%%%%%%%%%%%%%%%%%%%%%%%%%%%%%%%%%%%%%%%%%%%%%%%%%%%%%%%%%%%%%%%
\section{Preliminaries}\label{sec:Preliminary}
%%%%%%%%%%%%%%%%%%%%%%%%%%%%%%%%%%%%%%%%%%%%%%%%%%%%%%%%%%%%%%%

This section provides the required notation, a typical GRL framework and our homomorphism setup.
%agent-environment model
We consider a simple agent-environment setup \cite{Sutton1998}. 
The agent has a finite set of actions $\A$. The environment receives an action from the agent and gives a standard observation from a finite set of observations $\O$ and a real-valued reward from a finite set $\R \subseteq \mathbb{R}$. 
%infinite-history
The agent interacts with the environment in cycles and in each cycle the agent performs an action and receives an observation and reward from the environment. This agent-environment interaction generates a possibly infinite history from an infinite set of histories $\H := \bigcup_{t=0}^{\infty} (\A \times \O \times \R)^t$. Hence, the original state-action space is the history-action space\footnote{In general, histories are considered as the states of the environment, so we interchangeably call the history-action space the \emph{original} state-action space.}, i.e. $\H \times \A$.
%aggregation-agent
Similarly, we define an abstract finite state space $\S$ and action space $\B$ to form the abstract state-action space, i.e. $\S \times \B$.

%\subsection{Notation}
We use a consistent notation throughout the paper unless otherwise stated. We use $\Delta(\cdot)$ to denote a probability distribution over its argument, 
$\norm{\cdot}_1$ expresses a first norm, $\tilde{x}$ is a local variable and $x^\prime$ is a different member of the same set. We use a shorthand notation $\forall f(x)=y$ to imply $\forall x, y: f(x)=y$. 
\ifshort\else A detailed list of notation is provided in Appendix~\ref{sec:notation}. \fi
We often make references to the results presented later in the paper. The reader is not encouraged to follow these \emph{justifying} references in the first reading.

\subsection{General Reinforcement Learning Framework} \label{sec:grl-setup}
This section provides a formal layout of a typical GRL framework and some assumptions we make about the setup. We start our setup by defining two center pieces of any RL setup: the environment and the agent/policy\footnote{While it can be argued that an agent and a policy are two separate entities, in this work we use them interchangeably.}. The environment, also referred as the \emph{original process} $P$, is defined as a stochastic mapping from a history-action pair to a distribution over the observation-reward pairs, i.e. $P : \H \times \A \rightarrow \Delta(\O \times \R)$. The \emph{history-based} agent/policy $\Pi$ is defined to be a function that stochastically maps a history to the actions as $\Pi : \H \rightarrow \Delta(\A)$.

\ifshort\else The agent has to maximize the expected sum of rewards it gets from the environment. But this sum can diverge if the agent keeps on adding infinitely many positive rewards. There are, at least, two options to rectify this problem by either assuming a finite sum of rewards or letting the agent discount its future rewards. \fi

\begin{assumption}{$(\mathrm{Geometric \ discounting})$} We assume a geometric discounting over the rewards --- i.e. the agent discounts its future rewards by a constant discount factor $\g \in [0, 1)$.
\end{assumption}

\ifshort\else This discounting of rewards serves a dual purpose for the agent, first, it eliminates the problem of infinite sum and second it serves as a parameter for the effective future rewards the agent should care about. A small discount factor makes the agent short sighted and a large discount factor lets the agent be more concerned about future rewards. \fi
The goal of the agent is to maximize the expected discounted sum of rewards which is generally expressed with Bellman equations of \mbox{(action-)value} functions \cite{Sutton1998}. The agent tries to maximize this value function and strives to reach the most valuable states. We define the action-value function $Q^\Pi$ for any history $h \in \H$ and action $a \in \A$ as
\beq
Q^\Pi(h,a) := \sum_{\to \in \O, \tr \in \R} P(\to \tr |ha)\left(\tr + \g V^\Pi(\th)\right)
\eeq
where $\th := ha\to \tr$ is an extended history and the corresponding value function $V^\Pi$ is defined as
\beq
V^\Pi(h) := \sum_{\ta \in \A} Q^\Pi(h,\ta) \Pi(\ta|h).
\eeq

The \mbox{(action-)value} functions are maximized if the agent is following an \emph{optimal policy} $\Pi^* \in \arg\max_{\tilde{\Pi}} V^{\tilde{\Pi}}$. 
\ifshort\else In general the environment can dispense any real-valued reward. But, for a simplified analysis we assume that rewards are bounded and positive. \fi

\begin{assumption}{$(\mathrm{Bounded \ positive \ reward})$} \label{asm:positive-rewards}
    We assume bounded and positive rewards and without loss of generality we assume $\R :\subseteq [0,1]$. 
\end{assumption}

It is easy to see that the bounded rewards bound the \mbox{(action-)value} functions between zero and $1/(1-\g)$.

\subsection{Homomorphism Setup}

We define a homomorphism as a surjective mapping $\psi$ from the original state-action space $\H \times \A$ to the abstract state-action space $\S \times \B$. 
For a succinct exposition, we also define a few marginalized mapping functions. These marginalized maps do not have any special significance other than making the notation a bit simpler. 

\paradot{Histories mapped to an $sb$-pair} For a given abstract action $b \in \B$, we define a marginalized abstract state map as
\beq
\psi^{-1}_b(s) := \left\lbrace h \in \H \mid \exists a \in \A : \psi(h,a)=(s,b)\right\rbrace.
\eeq

\paradot{Actions mapped to an $sb$-pair} Similarly, we also define a marginalized abstract action map for any abstract state $s \in \S$ and history $h \in \H$ as
\beq
\psi^{-1}_s(b) := \left\lbrace a \in \A \mid \psi(h,a)=(s,b)\right\rbrace.
\eeq

It is important to note that $\psi^{-1}_s(b)$ is also a function of history. This dependence is always clear from the context, so we suppress it in the notation. %(see Section \ref{sec:proofs}). 

\paradot{Abstract states mapped by a history} By a slight abuse of notation we overload $\psi$, and define a history to abstract state marginalized map as
\beq
\psi(h) := \{s \in \S \mid \exists a \in \A, b \in \B : \psi(h,a)=(s,b)\}.
\eeq

\paradot{Histories mapped to an abstract state} Finally, an abstract state to history marginalized map is defined as
\beq
\psi^{-1}(s) := \{h \in \H \mid \exists a \in \A, b \in \B : \psi(h,a)=(s,b)\}.
\eeq

\ifshort\else In this work, we assume a structure for the aggregation map $\psi$. The general unstructured case is left for future research. \fi

\begin{assumption}{$(\psi(h)=s)$}\label{asm:state-history}
    We assume that an abstract state is determined only by the history --- i.e. $\psi(h,a) := (s=f(h),b)$, where $f$ is any fixed \emph{surjective} function of history and is independent of actions $a$ and $b$. 
\end{assumption}

The above assumption implies that $\psi(h)$ is \emph{singleton}. This is not only a technical necessity but a requirement to make the mapping \emph{causal}, i.e. the current history $h$ corresponds to a unique state $s$ independent of the next action taken by the agent. If we drop this assumption then the current history might resolve to a different state based on the next (future) action taken by the agent.

A homomorphic map $\psi$ lets the agent merge the experiences from $P$ and induces a \emph{history-based} abstract process $P_\psi$. Formally, for all $\psi(h,a) = (s,b)$ and any next abstract state $\ps$, we express $P_\psi$ as    
\beq\label{eq:marginP}
P_\psi(\ps r|ha) := \sum_{\to : \psi(ha\to r)=\ps} P(\to r|ha).
\eeq
%\begin{definition}{$(P_\psi)$}
%\end{definition}

The map $\psi$ also induces a \emph{history-based} abstract policy $\Pi_{\psi}$ as
\beq\label{eq:dpih}
\Pi_\psi(b|h) := \sum_{\ta \in \psi^{-1}_{s}(b)} \Pi(\ta|h).
\eeq

It is clear from \eqref{eq:marginP} and \eqref{eq:dpih} that the induced abstract process and policy are in general non-Markovian, i.e. both are functions of the history $h$ and not only the abstract state $s$.

\paradot{Non-MDP homomorphisms} In this work we consider two types of non-Markovian homomorphisms: {\bf a)} \emph{Q-uniform} homomorphisms, where the state-action pairs are merged if they have close Q-values, i.e. $Q^\Pi(h,a) \approx Q^\Pi(\ph,\pa)$ for all $\psi(h,a)=\psi(\ph,\pa)$, and {\bf b)} \emph{V-uniform} homomorphisms, when the merged state-action pairs have close values , i.e. $V^\Pi(h) \approx V^\Pi(\ph)$ for all $\psi(h)=\psi(\ph)$. A formal treatment of these non-MDP homomorphisms is provided in the main results section. In both Q and V-uniform homomorphisms, $P_\psi$ can be history-dependent, in result, the abstract process is non-MDP.

%%%%%%%%%%%%%%%%%%%%%%%%%%%%%%%%%%%%%%%%%%%%%%%%%%%%%%%%%%%%%%%
\section{Motivation for Non-MDP Homomorphisms}\label{sec:Example}
%%%%%%%%%%%%%%%%%%%%%%%%%%%%%%%%%%%%%%%%%%%%%%%%%%%%%%%%%%%%%%%

In this section we motivate the importance of non-MDP homomorphisms by an example. We show that a non-MDP homomorphism can cater to a large set of domains and allows more compact representations.

\paradot{Navigational Grid-world} Let us consider a simplified version of the asymmetric grid-world example by \citeay{Ravindran2004} in Figure \ref{fig:grid-world-original}. In this navigational domain, the goal of an agent $\Pi$ is to navigate the grid to reach the target cell $T$. The unreachable cells are grayed-out. The agent receives a large positive reward if it enters the cell $T$, otherwise a small negative reward is given to the agent at each time-step. The agent is capable of moving in the four directions, i.e. up, down, left and right. This domain has an \emph{almost} similar transition and reward structure across a diagonal axis. We call this an approximate MDP axis and denote it by $\approx$MDP. This axis of symmetry enables us to create a homomorphism of the domain using approximately half of the original state-space (see Figure \ref{fig:grid-world-abstract}).

\begin{figure}[h]
    \centering
    \resizebox{!}{
    \ifshort 6cm \else 7cm \fi
    }
    {%
    \begin{tikzpicture}
    [%%%%%%%%%%%%%%%%%%%%%%%%%%%%%%
    box/.style={rectangle,draw=black,thick, minimum size=1cm},
    gray-box/.style={box, fill=gray}
    ]%%%%%%%%%%%%%%%%%%%%%%%%%%%%%%
    
    \foreach \x in {0,1,...,5}{
        \foreach \y in {0,1,...,5}
        \node[box] at (\x,\y){};
    }
    \node[gray-box] at (2,5){};  
    \node[gray-box] at (4,4){};  
    \node[gray-box] at (2,3){};  
    \node[gray-box] at (3,2){};  
    \node[gray-box] at (0,2){};  
    \node[gray-box] at (5,2){};  
    \node[gray-box] at (2,0){};  
    \node[gray-box] at (5,0){}; 
    
    \node at (5,5){T};
    \draw[->, ultra thick] (1.25,4) -- (1.75,4);
    \node at (1,4){$\Pi$};
    \draw[->, ultra thick] (4,1.25) -- (4, 1.75);
    \node at (4,1){$\Pi$};
    
    \node[above] at (6,6){$\approx$MDP};
    \draw[dashed] (-1,-1) -- (6,6);
    \node at (5.7,5.7) {\AxisRotator[x=0.2cm,y=0.4cm,->,rotate=60]};
    \end{tikzpicture}
    }
    \caption{The original navigational grid-world with the axis of approximate symmetry. The gray cells are not reachable. The target cell is at the top right corner. The figure shows two possible positions of the agent and corresponding optimal actions.}\label{fig:grid-world-original}
\end{figure}
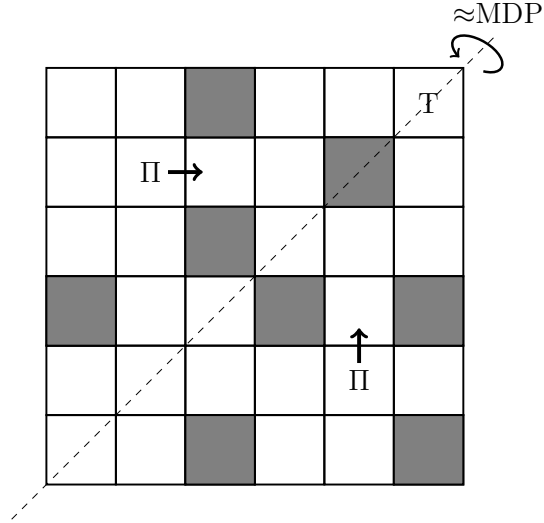
\begin{figure}[h]
    \centering
    %\hfil
    \resizebox{!}
    {
        \ifshort 4.8cm \else 5.5cm \fi
    }
    {%
    \begin{tikzpicture}
    [%%%%%%%%%%%%%%%%%%%%%%%%%%%%%%
    box/.style={rectangle,draw=black,thick, minimum size=1cm},
    gray-box/.style={box, fill=gray},
    stripe-box/.style={box, pattern=north west lines, pattern color=gray}
    ]%%%%%%%%%%%%%%%%%%%%%%%%%%%%%%
    
    \foreach \y in {1,...,5}{
        \foreach \x in {0,1,...,\y}
        \node[box] at (\x,\y){};
    }
    \node[box] at (0,0){};
    \node[gray-box] at (2,5){};  
    \node[gray-box] at (4,4){};  
    \node[gray-box] at (2,3){};  
    \node[gray-box] at (0,2){};  
    \node[stripe-box] at (0,5){};
    
    \node at (5,5){T};
    \draw[->, ultra thick] (1.25,4) -- (1.75,4);
    \node at (1,4){$\pi$};
    
    \end{tikzpicture}}
    \caption{A possible MDP homomorphism by merging the mirror state-action pairs together. The presence of a hashed cell indicates that it is not an exact homomorphism. The agent $\pi$ solves the problem in this abstract domain.}\label{fig:grid-world-abstract}
    \hfill
\end{figure}
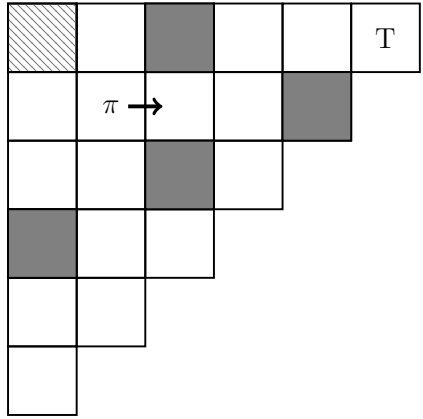

This grid-world example has primarily been studied in the context of either exact, approximate or Bounded parameter MDP (BMDP) homomorphisms \cite{Ravindran2004}: the abstract model \emph{approximately} preserves the one-step dynamics of the original environment. However, as we later prove in this paper (see Theorem \ref{thm:psiQstar}ii), some non-MDP homomorphisms can also be used to find a near-optimal policy in the original process. We motivate the need of non-MDP homomorphisms, first, by highlighting the fact\footnote{This section is an informal motivation, we formally deal with this fact in the main results section (Theorem \ref{thm:psimdpstar}i).} that in the grid-world domain, the states with similar dynamics have similar optimal action-values. Afterwards, we modify the grid-world domain such that the modified grid-world does not have an approximate MDP symmetry axis, but still has the same approximate optimal action-values symmetry.

We apply Value Iteration (VI) \cite{Bellman1957} with some fixed but irrelevant parameters on the grid world (see Figure \ref{fig:optimal-values-gw}). The grid world has the same approximate symmetry axis for the optimal values, denoted by $\approx$Q-uniform axis. It is easy to see that each merged state in Figure \ref{fig:grid-world-abstract} has the same action-values. Hence, the $\approx$MDP axis is also an $\approx$Q-uniform axis in the grid-world.

\begin{figure}
    \centering
    \resizebox{!}
    {
        \ifshort 6cm \else 7cm \fi
    }
    {%
    \begin{tikzpicture}
    [%%%%%%%%%%%%%%%%%%%%%%%%%%%%%%
    box/.style={rectangle,draw=black,thick, minimum size=1cm},
    gray-box/.style={box, fill=gray},
    highlight/.style={box,draw=black,ultra thick,dashed}
    ]%%%%%%%%%%%%%%%%%%%%%%%%%%%%%%
    
    \foreach \x in {0,1,...,5}{
        \foreach \y in {0,1,...,5}
        \node[box] at (\x,\y){};
    }
    \node[gray-box] at (2,5){};  
    \node[gray-box] at (4,4){};  
    \node[gray-box] at (2,3){};  
    \node[gray-box] at (3,2){};  
    \node[gray-box] at (0,2){};  
    \node[gray-box] at (5,2){};  
    \node[gray-box] at (2,0){};  
    \node[gray-box] at (5,0){}; 
    
    \node at (0,0){1.74};
    \node at (0,1){1.94};
    \node at (0,3){2.42};
    \node at (0,4){2.70};
    \node at (0,5){\textbf{2.42}};
    
    \node at (1,0){1.94};
    \node at (1,1){2.17};
    \node at (1,2){2.42};
    \node at (1,3){2.70};
    \node at (1,4){3.01};
    \node at (1,5){\textbf{2.17}};
    
    \node at (2,1){2.42};
    \node at (2,2){2.17};
    \node at (2,4){3.35};
    
    \node at (3,0){2.42};
    \node at (3,1){2.70};
    \node at (3,3){3.35};
    \node at (3,4){3.74};
    \node at (3,5){4.16};
    
    \node at (4,0){2.70};
    \node at (4,1){3.01};
    \node at (4,2){3.35};
    \node at (4,3){3.74};
    \node at (4,5){4.64};
    
    \node at (5,1){\textbf{2.70}};
    \node at (5,3){4.16};
    \node at (5,4){4.64};
    \node at (5,5){5.16};
    
    \node[highlight] at (4,2){};
    \node[highlight] at (2,4){};
    
    \node[above] at (6,6){$\approx$Q-uniform};
    \draw[dashed] (-1,-1) -- (6,6);
    \node at (5.7,5.7) {\AxisRotator[x=0.2cm,y=0.4cm,->,rotate=60]};
    
    \end{tikzpicture}}
    \caption{The optimal values at each approachable cell. The bold-faced values are not exactly matched across the symmetry axis.} \label{fig:optimal-values-gw}
\end{figure}

\paradot{Modified Navigational Grid-world} Now we modify the grid world such that it does not have an $\approx$MDP axis (Figure \ref{fig:grid-world-original}) but it still has the same $\approx$Q-uniform axis (Figure \ref{fig:optimal-values-gw}). The idea is to take a pair of merged states from Figure \ref{fig:grid-world-original} and change the reward and transition probabilities such that the states no longer have similar one-step dynamics but still have similar action-values. For example, let us consider the cells highlighted with dashed borders in Figure \ref{fig:optimal-values-gw} and denote the cell in the bottom half with $s_{23}$. Let $u,d,p_u$ and $p_d$ denote the actions up and down, and the probabilities to reach the desired cell by taking the corresponding action, respectively. Let $r_u$ and $r_d$ be the expected rewards for each action in the state $s_{23}$. In general, we get an \emph{under-determined} set of equations for the action-value function at state $s_{23}$ as
\beq\label{eq:q23}
Q^*(s_{23},a) = \begin{cases}
    r_u + 0.73\g p_u  + 3.01\g   &\quad \text{if }a = u\\
    r_d - 0.73\g p_d + 3.74\g  &\quad \text{if } a = d.
\end{cases}
\eeq

In the original navigational grid-world problem $p_u = p_d = 1$, i.e. each action leads deterministically to the indented reachable cell, and $r_u=r_d = r_n$, where $r_n$ is a fixed small negative reward. We can break the $\approx$MDP similarity by setting\footnote{$p_u = 0$ implies that the action $u$ now takes the agent to the down cell and vice versa for the action $d$.} $p_u = p_d := 0$, i.e. the actions behave in the opposite way in the lower half, $r_u := r_n + 0.73\g$ and $r_d := r_n - 0.73\g$, without changing the $\approx$Q-uniform similarity. In fact, we can have infinite combinations of rewards and transitions to get a set of modified domains since the set of equations \eqref{eq:q23} is under-determined.

This set of \emph{modified} domains, by design, no longer allows the approximate MDP homomorphism of Figure \ref{fig:grid-world-abstract}. Every state is different in terms of reward and transition structure across the $\approx$MDP axis of Figure \ref{fig:grid-world-original}. Any one-step model similarity abstraction would be approximately of the same size as the original problem. However, if we consider Q-uniform homomorphisms, i.e. state-action pairs are merged if the action-values are close, then the set of modified domains has a same Q-uniform homomorphism.

In GRL, it is natural to assume that the (expected) rewards are function of realized history. The above modification argument is more likely to hold in a GRL setting: the reward and transition similarity might be hard to satisfy. Therefore, a GRL agent is better to consider such non-MDP homomorphisms to cover more domains with a single abstract model. Now we ask the main question, does such a non-MDP homomorphism, e.g. Q-uniform homomorphism, have a guaranteed solution for the original problem? In the next section, we answer this question in affirmative for Q-uniform homomorphisms (Theorem \ref{thm:psiQstar}ii), but in negative for V-uniform 
homomorphisms with a weaker positive result (Theorem \ref{thm:psiVstar}ii). 

%%%%%%%%%%%%%%%%%%%%%%%%%%%%%%%%%%%%%%%%%%%%%%%%%%%%%%%%%%%%%%%
\section{Key Elements to Go Beyond MDPs}\label{sec:key-elements}
%%%%%%%%%%%%%%%%%%%%%%%%%%%%%%%%%%%%%%%%%%%%%%%%%%%%%%%%%%%%%%%

This section introduces the key elements of the paper that enables us to prove performance bounds for non-MDP homomorphisms.

\subsection{A Stochastic Inverse and Surrogate MDP}

The key idea to get a near-optimal policy of the true environment $P$ is to transform $P_\psi$ into a surrogate MDP on the abstract state-action space. 
Afterwards, the optimal policy of this surrogate MDP is uplifted to $P$. This technique of casting a non-MDP process as an MDP has been used in ESA \cite{Hutter2016}. To get this surrogate MDP, we define a stochastic inverse $B$ of the homomorphism $\psi$ as a probability measure over the history-action space given an abstract state-action pair, formally, $B : \S \times \B \rightarrow \Delta(\H \times \A)$. Moreover, we require $B(ha|sb) := 0$ for any $\psi(h,a) \neq (s,b)$. The surrogate MDP is defined as
\beq\label{eq:beyondMDP}
p_B(\ps\pr | sb) := \sum_{\th \in \H, \ta \in \A} P_\psi(\ps\pr | \th\ta)B(\th\ta|sb).
\eeq

It might seem like a paradoxical idea to solve a non-Markovian $P_\psi$ using an MDP $p_B$, but the paradox is superficial. It is the stochastic inverse that complements the non-Markovianness of $P_\psi$. Finding such an inverse algorithmically, hence the surrogate MDP, is not a trivial task in general \cite{Hutter2016}. 

%versus action-independent weighting function
This action-dependent stochastic inverse separates our work from the action-independent weighting function considered by \citeay{Abel2016}. Although, learning of such weighting function is beyond the scope of this paper, an action-independent weighting function is not learnable. Because, when this weighting function is built from the true sampling distribution, it becomes action-dependent. \citeay{Hutter2016} constructs such a learnable action-dependent inverse for the state abstraction case. Fortunately, the choice of $B$ becomes irrelevant in Q-uniform homomorphisms (see Theorem \ref{thm:psiQstar}), however this is not the case in V-uniform homomorphisms (see Theorem \ref{thm:psiVstar}). 

Similar to the original process, we also define the \mbox{(action-)value} functions for the surrogate MDP $p_B$ on the abstract state-action space $\S \times \B$ with an abstract state-based policy $\pi$. The action-value function is given as
\beq\label{eq:q}
q^\pi(s,b) := \sum_{\ts \in \S, \tr \in \R} p_B(\ts \tr | s b) \left( \tr + \g v^\pi(\ts)\right) \eeq
where the value function is
\beq\label{eq:v}
v^\pi(s) := \sum_{\tb \in \B} q^\pi(s, \tb) \pi(\tb | s).
\eeq

An abstract state-based \emph{optimal} policy $\pi^*$ is a value maximizer, i.e. $\pi^* \in \arg\max_{\tilde{\pi}}v^{\tilde{\pi}}$.

\subsection{Representative Abstract Policy}

As discussed earlier, we are primarily interested in the optimal policies of the surrogate MDP. However, it is also interesting to consider a general policy case (e.g. Theorems \ref{thm:psimdppi}, \ref{thm:exactmdp}, \ref{thm:psiQpi} and \ref{thm:psiVpi}) akin to an on-policy result where we uplift a representative policy. We use any arbitrary member as a representative policy $\piR$ on the abstract state $s$. 
\beq\label{eq:dpi}
\piR(\cdot|s) := \Pi_{\psi}(\cdot|\th), \quad  \text{for some } \th \in \psi^{-1}(s).
\eeq

This arbitrary choice of representative introduces a policy representation error $\DPi$ for each abstract state $s$, expressed as
\beq\label{eq:Delta}
\DPi(s) := \sup_{\th \in \psi^{-1}(s)} \norm{ \piR(\cdot|s) - \Pi_\psi(\cdot|\th)}_1.
\eeq

This representation error is small/zero when the induced abstract policy $\Pi_{\psi}$ is approximately/piecewise constant, i.e. $\Pi_{\psi}(\cdot|h) = \Pi_\psi(\cdot|\ph)$ for all $\psi(h) = \psi(\ph)$.

% ESA connection
\ifshort\else It is easy to see that the state aggregation mapping function $\phi$ of ESA setup\footnote{The reader is encouraged to see \cite{Hutter2016} for more details about $\phi$.} is a special case of our generalized mapping function $\psi$. We can mimic any $\phi$ by using a $\psi_\phi$ with identity transformation over action spaces --- i.e. $\psi_\phi(h,a) := (\phi(h),a)$. \fi

In the next section, we provide the main results of this work. We construct a near-optimal policy for the original process from the surrogate MDP even if the homomorphism is non-MDP. 

%%%%%%%%%%%%%%%%%%%%%%%%%%%%%%%%%%%%%%%%%%%%%%%%%%%%%%%%%%%%%%%
\section{Main Results}\label{sec:Aggregation}
%%%%%%%%%%%%%%%%%%%%%%%%%%%%%%%%%%%%%%%%%%%%%%%%%%%%%%%%%%%%%%%

\ifshort\else In this section, we prove that under some conditions an optimal policy of the abstract process losses only a faction of the value when uplifted in the original process. Also, these results hold even when the marginalized process in not an MDP. \fi
We analyze three types of homomorphisms in this work: MDP, Q-Uniform and V-Uniform homomorphisms\ifshort\footnote{Due to the limited space, we omit the proofs. But the proofs can be found in the extended version of this paper [REFARXIV].}\fi. Both Q and V-Uniform homomorphisms are non-Markovian by definition. In general, MDP and Q-Uniform homomorphisms admit a \emph{deterministic} near-optimal policy of the original process, while V-Uniform homomorphisms do not.

\subsection{Markov Decision Process Homomorphisms}

A homomorphism is an MDP homomorphism if the induced abstract process $P_\psi$ is an MDP. Then, there exists a process $p_{\mathrm{MDP}}$ such that for all $\psi(h,a) = (s,b)$ and for all $\ts$ and $\tr$, it holds:
\beq\label{eq:MDP}
P_\psi(\ts\tr|ha) = p_{\mathrm{MDP}}(\ts\tr|sb).
\eeq

Using the above condition in \eqref{eq:beyondMDP}, renders $p_B = p_{\mathrm{MDP}}$ and independent of $B$.
The condition $\eqref{eq:MDP}$ is a  \emph{stronger} version of the bisimulation condition \cite{Givan2003} that is generalized to joint history-action pairs. This condition is strong enough to preserve the optimal \mbox{(action-)value} functions of the original process (see Theorem \ref{thm:psimdpstar}). But, it is not strong enough to preserve arbitrary policy \mbox{(action-)value} functions (see Theorem \ref{thm:psimdppi}). Unless we define a notion of action-value function representative and a corresponding representation error. For an abstract state-action pair, the representative action-value is defined as
\beq\label{eq:Qrep}
Q^\Pi(\psi^{-1}(s,b)) := Q^\Pi(\th, \ta), \,  \text{for some } \psi(\th, \ta) = (s,b)
\eeq
and the representation error of the action-value function is expressed as
\beq\label{eq:delta}
\DQ(s) := \sup_{\th,\ta, \tb:\psi(\th,\ta) = (s,\tb)} \abs{ Q^\Pi(\psi^{-1}(s,\tb)) - Q^\Pi(\th, \ta)}.
\eeq

Similar to $\DPi$, this representation error is small/zero if the action-value function is approximately/piecewise constant.
At this point, we have all the required components properly defined to state the first theorem of the paper.
\begin{theorem}{$(\psi_{\mathrm{MDP}\Pi})$}\label{thm:psimdppi} 
    Let $\psi$ be a homomorphism such that $P_\psi$ is an MDP, then for any policy $\Pi$ and all $\psi(h,a) = (s,b)$ it holds:
    \bqan
    \left|q^\piR(s,b) - Q^\Pi(h,a)\right| &\leq&  \frac{\g \eps_{\max}}{1-\g} \quad \text{and} \quad \\
    \left|v^\piR(s) - V^\Pi(h)\right| &\leq&  \frac{\eps_{\max}}{1-\g}
    \eqan
    where $\eps_{\max} := \max_{\ts \in \S} \left(\DQ(\ts) + \frac{\DPi(\ts)}{1-\g}\right)$.
\end{theorem}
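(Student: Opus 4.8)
The plan is to control the two quantities simultaneously through a coupled pair of inequalities and then solve the resulting recursion. Introduce the uniform (sup-norm) gaps
\beqn
\delta_V := \sup_{h \in \H} \abs{v^\piR(f(h)) - V^\Pi(h)}, \qquad \delta_Q := \sup_{h \in \H,\, a \in \A} \abs{q^\piR(\psi(h,a)) - Q^\Pi(h,a)}.
\eeqn
By Assumption~\ref{asm:positive-rewards} all four (action-)value functions lie in $[0,1/(1-\g)]$, so both gaps are finite; this finiteness is precisely what legitimizes the rearrangement at the end. Bounding $\delta_V$ and $\delta_Q$ bounds every instance appearing in the theorem, since the claims must hold uniformly over all $\psi(h,a)=(s,b)$.

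First I would bound $\delta_Q$ by $\g\,\delta_V$. Fixing $(h,a)$ with $\psi(h,a)=(s,b)$, the MDP condition \eqref{eq:MDP} makes $p_B=p_{\mathrm{MDP}}$ independent of $B$ and lets me rewrite the abstract backup \eqref{eq:q} using the \emph{true} one-step dynamics read off from this particular $(h,a)$: unfolding $p_{\mathrm{MDP}}(\ts\tr|sb)=P_\psi(\ts\tr|ha)$ through the marginalization \eqref{eq:marginP} collapses the sum over abstract next-states into a sum over observations, giving $q^\piR(s,b)=\sum_{\to,\tr}P(\to\tr|ha)\big(\tr+\g\, v^\piR(f(ha\to\tr))\big)$. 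Subtracting the definition of $Q^\Pi(h,a)$, the reward terms cancel exactly and only a $\g$-discounted expectation of $v^\piR(f(\th))-V^\Pi(\th)$ over next histories $\th=ha\to\tr$ survives, whence $\delta_Q \le \g\,\delta_V$.

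Next I would bound $\delta_V$ by $\delta_Q+\eps_{\max}$. Fixing $h$ with $f(h)=s$, expand $v^\piR(s)$ via \eqref{eq:v} and $V^\Pi(h)$ via its definition, then interpolate in three triangle-inequality steps. Step one replaces each $q^\piR(s,\tb)$ by its representative $Q^\Pi(\psi^{-1}(s,\tb))$ from \eqref{eq:Qrep}, at a cost of at most $\delta_Q$. Step two swaps the representative policy $\piR(\cdot|s)$ for the induced policy $\Pi_\psi(\cdot|h)$; since the representatives are bounded by $1/(1-\g)$, this costs at most $\frac{1}{1-\g}\norm{\piR(\cdot|s)-\Pi_\psi(\cdot|h)}_1 \le \DPi(s)/(1-\g)$ by \eqref{eq:Delta}, using $f(h)=s$ so that $h\in\psi^{-1}(s)$. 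Step three regroups $\Pi_\psi(\tb|h)=\sum_{\ta\in\psi^{-1}_s(\tb)}\Pi(\ta|h)$ via \eqref{eq:dpih} to match $V^\Pi(h)=\sum_{\ta}Q^\Pi(h,\ta)\Pi(\ta|h)$ action-by-action; each replaced term differs from $Q^\Pi(h,\ta)$ by at most $\DQ(s)$ by \eqref{eq:delta}, and the probabilities sum to one, so this costs at most $\DQ(s)$. Summing the three costs and maximizing over $s$ yields $\delta_V \le \delta_Q + \eps_{\max}$.

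Combining the two estimates gives $\delta_V \le \g\,\delta_V + \eps_{\max}$; solving this recursion with $\delta_V<\infty$ gives $\delta_V \le \eps_{\max}/(1-\g)$, and then $\delta_Q \le \g\,\delta_V \le \g\,\eps_{\max}/(1-\g)$, which are exactly the two claimed uniform bounds. I expect the only delicate point to be the third interpolation step: correctly pairing each induced abstract action $\tb$ with the underlying original actions $\ta\in\psi^{-1}_s(\tb)$ (whose fiber depends on $h$) and verifying that every such pair satisfies $\psi(h,\ta)=(s,\tb)$, so that the representation error $\DQ(s)$ of \eqref{eq:delta} legitimately applies term-by-term. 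The $\g\,\delta_V$ step, by contrast, is essentially forced once \eqref{eq:MDP} is invoked to remove the dependence on the stochastic inverse $B$.
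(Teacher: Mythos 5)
Your proposal is correct and takes essentially the same route as the paper's proof: your interpolation bound $\delta_V \le \delta_Q + \eps_{\max}$ is precisely the content of the paper's Lemma~\ref{lem:Vrep} (your steps two and three) combined with a triangle inequality, and your backup bound $\delta_Q \le \g\,\delta_V$ is the paper's concluding chain obtained by unfolding the abstract backup through \eqref{eq:MDP} and \eqref{eq:marginP}. The only difference is bookkeeping: you solve a coupled system in two suprema, whereas the paper tracks the single Q-gap supremum $\d$, bounds the V-gap inline by $\d + \DQ(s) + \DPi(s)/(1-\g)$, and solves the same recursion $\d \le \g(\d + \eps_{\max})$.
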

\ifshort\else
\begin{proof}
   See Appendix~\ref{sec:proofs}.
\end{proof}
\fi

The above theorem shows that the surrogate MDP \emph{approximately} preserves the \mbox{(action-)value} functions of the original process for any arbitrary policy. However, these \mbox{(action-)value} functions are preserved \emph{exactly} if we further impose a policy uniformity condition in addition to an MDP assumption. 

\begin{theorem}{$(\psi_{\mathrm{MDP}\Pi=})$} \label{thm:exactmdp}
    Let $\psi$ be a homomorphism such that $P_\psi$ is an MDP and $\Pi(\cdot|h) = \Pi(\cdot|\ph)$ (i.e. the policy similarity condition holds) for some policy $\Pi$ and for all $\psi(h) = \psi(\ph)$. Then for all $\psi(h,a) = (s,b)$ it holds:
    \beqn
    q^\piR(s,b) = Q^\Pi(h,a) \quad \text{and} \quad v^\piR(s) = V^\Pi(h).
    \eeqn
\end{theorem}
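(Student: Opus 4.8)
The plan is to prove both identities by a fixed-point/uniqueness argument rather than by directly manipulating $Q^\Pi$ and $V^\Pi$ over the (uncountable) fibers of $\psi$. First I would lift the surrogate values back to the original space by setting $\hat{V}(h) := v^\piR(\psi(h))$ and $\hat{Q}(h,a) := q^\piR(\psi(h,a))$. Both are well defined by Assumption \ref{asm:state-history} (the abstract state is a singleton function of history), are bounded in $[0,1/(1-\g)]$, and are constant on the fibers of $\psi$ \emph{by construction}. The goal is then to show that $\hat V$ solves the same Bellman policy-evaluation equation that $V^\Pi$ solves, so that uniqueness of the fixed point forces $\hat V = V^\Pi$, and then to read off the $Q$-identity.

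Next I would verify that $\hat V$ is a fixed point. Starting from the right-hand side $\sum_{\ta}\Pi(\ta|h)\sum_{\to,\tr}P(\to\tr|h\ta)\bigl(\tr + \g\hat V(h\ta\to\tr)\bigr)$, I replace $\hat V(h\ta\to\tr)$ by $v^\piR(\ps)$ with $\ps := \psi(h\ta\to\tr)$, which is legitimate precisely because $\hat V$ is fiber-constant, and then group the observation sum by the pair $(\ps,\tr)$. By the marginalization \eqref{eq:marginP} the inner sum collapses to $\sum_{\ps,\tr}P_\psi(\ps\tr|h\ta)\bigl(\tr + \g v^\piR(\ps)\bigr)$. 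Writing $(s,\tb) := \psi(h,\ta)$ and invoking the MDP condition \eqref{eq:MDP} together with the stated fact $p_B = p_{\mathrm{MDP}}$, this becomes $\sum_{\ps,\tr}p_B(\ps\tr|s\tb)\bigl(\tr + \g v^\piR(\ps)\bigr) = q^\piR(s,\tb)$ by \eqref{eq:q}. Grouping the remaining action sum by the abstract action $\tb$ and applying the definition \eqref{eq:dpih} turns $\sum_{\ta}\Pi(\ta|h)$ restricted to each $\psi^{-1}_s(\tb)$ into $\Pi_\psi(\tb|h)$.

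The crucial step is now the policy-uniformity hypothesis: since $\Pi(\cdot|h)=\Pi(\cdot|\ph)$ whenever $\psi(h)=\psi(\ph)$, the induced policy $\Pi_\psi(\cdot|h)$ depends only on $s=\psi(h)$, hence $\DPi(s)=0$ in \eqref{eq:Delta} and $\Pi_\psi(\tb|h)=\piR(\tb|s)$ \emph{exactly} by \eqref{eq:dpi}. The right-hand side therefore equals $\sum_{\tb}q^\piR(s,\tb)\piR(\tb|s)=v^\piR(s)=\hat V(h)$ by \eqref{eq:v}, confirming that $\hat V$ is a fixed point. Since rewards are bounded (Assumption \ref{asm:positive-rewards}) and $\g<1$, the policy-evaluation operator is a $\g$-contraction in the supremum norm on bounded functions over $\H$, so its fixed point is unique; thus $V^\Pi=\hat V$, i.e. $V^\Pi(h)=v^\piR(s)$. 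Substituting this identity into the definition of $Q^\Pi(h,a)$ and repeating the observation-grouping step (now with no action averaging needed) yields $Q^\Pi(h,a)=q^\piR(s,b)$.

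I expect the main obstacle to be the apparent circularity: regrouping the observation sum seems to require that $V^\Pi$ already be fiber-constant. The fixed-point viewpoint is exactly what dissolves this, since one only ever tests the manifestly fiber-constant candidate $\hat V$ against the Bellman equation and then leans on uniqueness, instead of trying to establish fiber-constancy of $V^\Pi$ a priori. One could alternatively try to obtain the statement as the $\eps_{\max}=0$ specialization of Theorem \ref{thm:psimdppi}, but that route still requires separately arguing $\DQ=0$, which is itself only a \emph{consequence} of the fiber-constancy recovered above; hence the self-contained contraction argument is the cleaner and more direct path.
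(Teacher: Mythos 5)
Your proof is correct, and it takes a genuinely different route from the paper's. The paper argues by contracting the fiber oscillation of $Q^\Pi$: it sets $\d:=\sup\abs{Q^\Pi(h,a)-Q^\Pi(\ph,\pa)}$ over pairs with $\psi(h,a)=\psi(\ph,\pa)$, uses policy uniformity to bound the oscillation of $V^\Pi$ by $\d$, then uses the MDP condition \eqref{eq:MDP} to get $\d\le\g\d$, hence $\d=0$; this gives exact fiber-constancy of $Q^\Pi$, so $\DQ=0$ and $\DPi=0$, and the claimed equalities follow as the $\eps_{\max}=0$ specialization of Theorem~\ref{thm:psimdppi}. In other words, the paper takes precisely the ``alternative route'' you dismiss in your last paragraph, including the separate argument for $\DQ=0$ that you correctly anticipated it would need. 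Your argument instead lifts the surrogate values to history space, verifies that the lift $\hat V$ satisfies the policy-evaluation Bellman equation (using \eqref{eq:marginP}, \eqref{eq:MDP}, \eqref{eq:dpih} and the policy-uniformity hypothesis), and invokes uniqueness of the bounded fixed point of a $\g$-contraction. What this buys: your proof is self-contained (no appeal to Theorem~\ref{thm:psimdppi} or Lemma~\ref{lem:Vrep}), it avoids the representative bookkeeping entirely, and fiber-constancy of $Q^\Pi$ falls out as a corollary ($Q^\Pi=\hat Q$ is fiber-constant by construction) rather than being the engine of the proof; the paper's route, conversely, isolates $\DQ=0$ as an explicit intermediate fact that plugs into its general approximation machinery. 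Both ultimately rest on the same contraction principle, applied to different objects. One caveat, which you share with the paper rather than introduce: the step $\Pi_\psi(\cdot|h)=\piR(\cdot|s)$ does not follow from $\Pi(\cdot|h)=\Pi(\cdot|\ph)$ alone, because the partition $\psi^{-1}_s(\cdot)$ of $\A$ appearing in \eqref{eq:dpih} is itself history-dependent; one implicitly needs this partition to agree across each fiber. The paper makes the identical leap (its bound $\abs{Q^\Pi(h,\ta)-Q^\Pi(\ph,\ta)}\le\d$ for a common original action $\ta$ presumes $\psi(h,\ta)=\psi(\ph,\ta)$, and it asserts $\DPi=0$ ``by assumption''), so this is a shared implicit reading of the hypothesis, not a gap specific to your argument.
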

\ifshort\else
\begin{proof}
    See Appendix~\ref{sec:proofs}.
\end{proof}
\fi

Theorems \ref{thm:psimdppi} and \ref{thm:exactmdp} are important but not very useful results. As already discussed, we are interested in the \mbox{(near-)optimal} policies of the original process. And, we want to find the abstract policies that can be lifted with a performance guarantee from the abstract state-action space to the original history-action space.

\begin{theorem}{$(\psi_{\mathrm{MDP}*})$}\label{thm:psimdpstar} 
    Let $\psi$ be a homomorphism such that $P_\psi$ is an MDP, then for all $\psi(h,a)=(s,b)$ it holds:
    \begin{enumerate}[(i)]
        \item $q^*(s,b) = Q^*(h,a)$ and  $v^*(s) = V^*(h)$.
        \item $V^*(h)=V^{\breve{\Pi}}(h)$ 
        
        where $\breve{\Pi}(h) :\in \psi^{-1}_s\left(\pi^*(s)\right)$ for any $\psi(h)=s$.
    \end{enumerate}
\end{theorem}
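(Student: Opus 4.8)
The plan is to prove both claims by writing down explicit history-based value functions that are \emph{constant on each abstraction class} and then appealing to uniqueness of the relevant Bellman fixed point. Everything rests on one bookkeeping identity. Fix $\psi(h,a)=(s,b)$ and let $W$ be any bounded function on $\S$. Partitioning the observations according to the next abstract state and then invoking the MDP condition \eqref{eq:MDP}, I get the \textbf{grouping identity}
\beqn
\sum_{\to,\tr} P(\to\tr|ha)\bigl(\tr + \g\,W(\psi(ha\to\tr))\bigr) \;=\; \sum_{\ps,\tr} p_{\mathrm{MDP}}(\ps\tr|sb)\bigl(\tr + \g\,W(\ps)\bigr),
\eeqn
because $\sum_{\to:\,\psi(ha\to\tr)=\ps}P(\to\tr|ha)=P_\psi(\ps\tr|ha)=p_{\mathrm{MDP}}(\ps\tr|sb)$, and by Assumption~\ref{asm:state-history} $\psi(ha\to\tr)=f(\th)$ is the unique next abstract state. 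This is the only place the MDP hypothesis enters.

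For part (i), define $\hat V(h):=v^*(f(h))$ and $\hat Q(h,a):=q^*(s,b)$ for $\psi(h,a)=(s,b)$. Applying the grouping identity with $W=v^*$ and comparing with \eqref{eq:q} shows $\hat Q(h,a)=\sum_{\to,\tr}P(\to\tr|ha)(\tr+\g\hat V(\th))$, i.e. $\hat Q$ is the one-step backup of $\hat V$. For the maximization step, note that for fixed $h$ the assignment $\ta\mapsto b_{\ta}$ with $\psi(h,\ta)=(f(h),b_{\ta})$ ranges over all of $\B$, so $\max_{\ta}\hat Q(h,\ta)=\max_{\tb}q^*(f(h),\tb)=v^*(f(h))=\hat V(h)$ by \eqref{eq:v}. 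Hence $(\hat V,\hat Q)$ solves the Bellman optimality equations of $P$; since rewards are bounded (Assumption~\ref{asm:positive-rewards}) and $\g<1$, the optimality operator is a contraction with a unique bounded fixed point, so $\hat V=V^*$ and $\hat Q=Q^*$, which is (i).

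For part (ii), take $\pi^*$ deterministic (an MDP always admits one) and let $\breve\Pi(h)\in\psi^{-1}_s(\pi^*(s))$ for $s=f(h)$. I will check that $\tilde V(h):=v^*(f(h))$ satisfies the \emph{policy-evaluation} Bellman equation of $\breve\Pi$: applying the grouping identity with $W=v^*$ at the action $a=\breve\Pi(h)$, which maps to $(s,\pi^*(s))$, the backup equals $q^*(s,\pi^*(s))$, and this in turn equals $\max_{\tb}q^*(s,\tb)=v^*(s)=\tilde V(h)$ because $\pi^*$ is optimal for $p_{\mathrm{MDP}}$. By uniqueness of the evaluation fixed point, $V^{\breve\Pi}=\tilde V$, and combining with (i) gives $V^{\breve\Pi}(h)=v^*(f(h))=V^*(h)$. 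Equivalently, one can observe that $\breve\Pi$ induces the class-constant abstract policy $\pi^*$ and route through Theorem~\ref{thm:exactmdp}.

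The main obstacle, and the delicate point to flag, is well-definedness of both the uplift and the max step: each requires that for every history $h$ the optimal abstract action $\pi^*(f(h))$ actually has a preimage, i.e. $\psi^{-1}_s(\pi^*(s))\neq\emptyset$, equivalently that the per-history action map $\ta\mapsto b_{\ta}$ is onto $\B$. This is strictly stronger than mere surjectivity of $\psi$ and must be read as part of the structural assumption on $\psi$; without it $\breve\Pi$ is undefined and the maximum over $\A$ need not reach $v^*(s)$. The remaining care is the routine justification that the Bellman operators on the infinite history space are contractions with unique bounded fixed points, which is exactly where geometric discounting and bounded rewards are used.
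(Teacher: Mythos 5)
Your proof is correct, but it is organized quite differently from the paper's. The paper proves (i) by defining $\d := \sup_{\psi(h,a)=(s,b)}\abs{q^*(s,b)-Q^*(h,a)}$ and deriving the self-bound $\d \leq \g\d$ (hence $\d=0$), which it does by routing through two auxiliary lemmas: Lemma~\ref{lem:VStarrep} (values are captured by maxima over representative action-values up to $\DQ$) and Lemma~\ref{lem:dzeromdp} (under the MDP condition, $Q^*$ is \emph{exactly} constant on each $\psi$-class, so $\DQ=0$); for (ii) it shows $Q^*(h,\breve{\Pi}(h)) = V^*(h)$ and then invokes the imported one-step-deviation bound, Lemma~\ref{lem:different-optimal-action}. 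You instead construct the class-constant candidates $\hat V = v^*\circ f$ and $\hat Q = q^*\circ\psi$, verify via your grouping identity that they satisfy the Bellman optimality equations of $P$ (respectively, that $v^*\circ f$ satisfies the policy-evaluation equation of $\breve{\Pi}$), and conclude by uniqueness of the contraction fixed point. The two arguments are mathematical cousins --- the paper's $\d\leq\g\d$ step is a uniqueness proof in disguise --- but yours is self-contained: it needs neither the $\DQ$/representative machinery, nor Lemma~\ref{lem:dzeromdp}, nor the external Lemma 8 of ESA, and it isolates in a single identity exactly where the MDP hypothesis enters. What the paper's heavier scaffolding buys is reuse: $\DQ$, Lemma~\ref{lem:Vrep}/\ref{lem:VStarrep} and Lemma~\ref{lem:different-optimal-action} carry over verbatim to the approximate settings (Theorems~\ref{thm:psimdppi}, \ref{thm:psiQpi}, \ref{thm:psiQstar}), where there is no exact fixed point to verify and only sup-difference bookkeeping survives; your exact-verification route does not generalize there.

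One remark on the issue you flag: the requirement that for \emph{every} history $h$ with $f(h)=s$ the per-history action map $\ta \mapsto b_\ta$ is onto $\B$ (so that $\psi^{-1}_s(\pi^*(s)) \neq \emptyset$) is indeed not implied by global surjectivity of $\psi$. You are right to call it out, but note that the paper needs it just as much: it is implicit in step $(a)$ of Lemma~\ref{lem:VStarrep} (where $\max_{\ta\in\A}$ is rewritten as $\max_{\tb\in\B}\max_{\ta\in\psi^{-1}_s(\tb)}$ and compared against a maximum of representatives over \emph{all} of $\B$), and in the very well-definedness of $\breve{\Pi}$ in the theorem statement. So this is a shared implicit strengthening of the surjectivity assumption, not a gap specific to your argument.
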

\ifshort\else
\begin{proof}
    See Appendix~\ref{sec:proofs}.
\end{proof}
\fi

For any MDP homomorphism, the performance guarantee is provided by Theorem \ref{thm:psimdpstar}ii. The abstract optimal policy $\pi^*$ is also an optimal policy for the original process when lifted to the original history-action space.

%%%%%%%%%%%%%%%%%%%%%%%%%%%%%%%%%%%%%%%%%%%%%%%%%%%%%%%%%%%%%%%
\subsection{Q-Uniform Homomorphisms}\label{sec:Extreme}
%%%%%%%%%%%%%%%%%%%%%%%%%%%%%%%%%%%%%%%%%%%%%%%%%%%%%%%%%%%%%%%

In this section, we relax the MDP condition (see Equation \ref{eq:MDP}) on the abstract-process provided by the homomorphism. We show that there still exists an abstract policy that is near-optimal in the original process (see Theorem \ref{thm:psiQstar}ii). We start with proving a value loss bound for an arbitrary policy when the action-value function of the original process is \emph{approximately} $\psi$-uniform.

\begin{theorem}{$(\psi_{Q^\Pi})$}\label{thm:psiQpi}
    Assume $\left|Q^\Pi(h,a)-Q^\Pi(\ph,\pa)\right|\leq \eps$ for some policy $\Pi$ and for all $\psi(h,a)=\psi(\ph,\pa)$. Then for all $\psi(h,a) = (s,b)$ it holds:
    \bqan
    \left|Q^\Pi(h,a) - q^\piR(s,b)\right| &\leq&  \eps + \frac{\g\eps(s)}{1-\g} \quad \text{and} \quad \\ \left|V^\Pi(h) - v^\piR(s)\right| &\leq& \frac{\eps(s)}{1-\g}
    \eqan
    where $\eps(s) := 2\eps + \frac{\DPi(s)}{1-\g}$.
\end{theorem}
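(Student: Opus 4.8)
The plan is to treat the two inequalities as the two halves of a single contraction argument on the global value gap
\[
\Delta_V := \sup_{\th \in \H}\abs{V^\Pi(\th) - v^\piR(\psi(\th))},
\]
which is finite because Assumption~\ref{asm:positive-rewards} confines all four value functions to $[0,1/(1-\g)]$. First I would unfold $q^\piR(s,b)$ using the definitions \eqref{eq:beyondMDP}, \eqref{eq:marginP} and \eqref{eq:q}: pushing the $B$-weights outside and marginalizing $P_\psi$ back to $P$ (here Assumption~\ref{asm:state-history} is essential, since it makes $\psi(\th\ta\to\tr)$ a single well-defined successor state so that $v^\piR(\psi(\cdot))$ can stand in for the value term) gives
\[
q^\piR(s,b) = \sum_{\th,\ta} B(\th\ta|sb)\sum_{\to,\tr} P(\to\tr|\th\ta)\bigl[\tr + \g\, v^\piR(\psi(\th\ta\to\tr))\bigr].
\]

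Subtracting the Bellman backup of $Q^\Pi(h,a)$ from this expression, the reward and transition pieces cancel because $B(\cdot|sb)$ is supported on pairs mapping to $(s,b)$ and $P_\psi$ is a marginal of $P$, leaving only a discounted gap at successor histories plus a $B$-average of $Q^\Pi(\th,\ta)$ over pairs with $\psi(\th,\ta)=(s,b)$. The $Q$-uniformity hypothesis bounds that average within $\eps$ of $Q^\Pi(h,a)$, so I obtain the one-step \emph{$Q$-recursion}
\begin{equation*}
\abs{q^\piR(s,b) - Q^\Pi(h,a)} \le \eps + \g\,\Delta_V. \tag{$\star$}
\end{equation*}

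For the value half I would expand $V^\Pi(h)=\sum_{\ta}Q^\Pi(h,\ta)\Pi(\ta|h)$ and regroup the original actions by their abstract image, writing the inner sum over each class $\psi^{-1}_s(\tb)$; $Q$-uniformity collapses each class to a representative value $\bar Q(h,\tb)$ at cost $\eps$, turning the expansion into $\sum_{\tb}\bar Q(h,\tb)\,\Pi_\psi(\tb|h)$ up to $\eps$, with $\Pi_\psi$ as in \eqref{eq:dpih}. Then I swap $\Pi_\psi(\cdot|h)$ for $\piR(\cdot|s)$, paying $\norm{\piR(\cdot|s)-\Pi_\psi(\cdot|h)}_1\le\DPi(s)$ against $\norm{\bar Q(h,\cdot)}_\infty\le 1/(1-\g)$, i.e.\ a cost $\DPi(s)/(1-\g)$, and I replace each representative $\bar Q(h,\tb)$ by $q^\piR(s,\tb)$ using $(\star)$. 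Collecting the three contributions gives $\abs{V^\Pi(h)-v^\piR(s)}\le 2\eps + \DPi(s)/(1-\g) + \g\,\Delta_V = \eps(s)+\g\,\Delta_V$.

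Finally I close the recursion: taking the supremum over $h$ yields $\Delta_V \le \sup_{\ts}\eps(\ts) + \g\,\Delta_V$, hence $\Delta_V\le \sup_{\ts}\eps(\ts)/(1-\g)$, which is the claimed $V$-bound (the stated per-state $\eps(s)$ being the worst case over the reachable successors); substituting this back into $(\star)$ delivers the $Q$-bound $\eps + \g\,\eps(s)/(1-\g)$. The main obstacle is the value-half bookkeeping: the class $\psi^{-1}_s(\tb)$ depends on the history $h$, so the regrouping must be done carefully, and the two error sources --- the within-class $Q$-uniformity slack and the policy-representation slack $\DPi$ weighted by the $1/(1-\g)$ value range --- have to be separated cleanly so that neither is double-counted. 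Establishing $(\star)$ also hinges on verifying that the marginalization of $P_\psi$ genuinely cancels the original transition term, which is where Assumption~\ref{asm:state-history} does the real work.
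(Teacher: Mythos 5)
Your proposal is correct and is essentially the paper's own proof with the contraction variable transposed: the paper solves the same pair of one-step inequalities for the Q-gap $\d:=\sup_{\psi(\th,\ta)=(\ts,\tb)}|Q^\Pi(\th,\ta)-q^\piR(\ts,\tb)|$ instead of your $\Delta_V$, with your value-half bookkeeping appearing as its Lemma~\ref{lem:Vrep} (together with Lemma~\ref{lem:eQPi}, which gives $\DQ(s)\le\eps$), and your one-step recursion $(\star)$ appearing as Lemma~\ref{lem:QBqDiff} combined with the $B$-average/Q-uniformity estimate. The per-state-versus-supremum caveat you flag is equally present in the paper's argument (it closes the recursion writing $\DQ(s)$ and $\DPi(s)$ at the current state even though $\d$ is a global supremum), so your attempt is faithful to, and no weaker than, the original.
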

\ifshort\else
\begin{proof}
    See Appendix~\ref{sec:proofs}.
\end{proof}
\fi

The following theorem \emph{improves} the optimal policy value loss bounds, \emph{cf.} Theorem \ref{thm:psiQpi}, and establishes the existence of a near-optimal policy of the original history-action space in the abstract state-action space.  

\begin{theorem}{$(\psi_{Q^*})$}\label{thm:psiQstar}
    Let $|Q^*(h,a)-Q^*(\ph,\pa)|\leq \eps$ for all $\psi(h,a)=\psi(\ph,\pa)$, then for all $\psi(h,a) = (s,b)$ it holds:
    \begin{enumerate}[(i)]
        \item
        $|Q^*(h,a) - q^*(s,b)| \leq \frac{2\eps}{1-\g}$ and
        
        $|V^*(h) - v^*(s)| \leq \frac{2\eps}{1-\g}$.
        %\item
        %If $\eps = 0$  then $\Pi^*(h) \in \psi_s^{-1}\left(\pi^*(s)\right)$ for any $\psi(h) = s$.
        \item
        $0 \leq V^*(h) - V^{\breve{\Pi}}(h) \leq \frac{4\eps}{(1-\g)^2}$ 
        
        where $\breve{\Pi}(h) :\in \psi^{-1}_s\left(\pi^*(s)\right)$ for any $\psi(h)=s$.
    \end{enumerate}
\end{theorem}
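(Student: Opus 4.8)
The plan is to push the $\eps$-uniformity of $Q^*$ through the surrogate MDP by an approximate fixed-point argument to obtain (i), and then convert the resulting value gap into a policy-loss bound via a standard ``approximately greedy'' lemma to obtain (ii). Throughout I write $s=\psi(h)$, which is a singleton by Assumption~\ref{asm:state-history}.

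First I would fix for every class a representative action-value $\bar Q(s,b):=Q^*(\th,\ta)$ with $\psi(\th,\ta)=(s,b)$, so that $|Q^*(h,a)-\bar Q(s,b)|\le\eps$ on the whole preimage, and set $\bar V(s):=\max_b \bar Q(s,b)$. A short auxiliary lemma gives $|V^*(h)-\bar V(s)|\le\eps$ whenever $\psi(h)=s$: the bound $V^*(h)\le\bar V(s)+\eps$ is immediate from $V^*(h)=\max_a Q^*(h,a)$, while the reverse bound uses that $h$ realizes the abstract action attaining $\bar V(s)$. Next I would show that $\bar Q$ is an approximate fixed point of the optimal Bellman operator $\mathcal{T}_B$ of $p_B$, defined by $\mathcal{T}_B Q(s,b):=\sum_{\ps\pr}p_B(\ps\pr|sb)(\pr+\g\max_{\pb}Q(\ps,\pb))$. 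Unfolding $p_B$ and $P_\psi$ and using that $B(\th\ta|sb)=0$ off $\psi^{-1}(s,b)$, the backup $\mathcal{T}_B\bar Q(s,b)$ is a $B$-convex combination over preimages $(\th,\ta)$ of the true one-step backups $\sum_{\to\tr}P(\to\tr|\th\ta)\bigl(\tr+\g\bar V(\psi(\th\ta\to\tr))\bigr)$. Each such backup differs from $Q^*(\th,\ta)$ by at most $\g\norm{\bar V-V^*}_\infty\le\g\eps$, and $Q^*(\th,\ta)$ differs from $\bar Q(s,b)$ by at most $\eps$; averaging over the convex combination preserves the bound, so $\norm{\mathcal{T}_B\bar Q-\bar Q}_\infty\le(1+\g)\eps$. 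Crucially, the arbitrary inverse $B$ washes out, which is why (i) and (ii) hold for \emph{any} $B$.

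Since $\mathcal{T}_B$ is a $\g$-contraction in sup-norm with fixed point $q^*$, the approximate fixed point yields $\norm{q^*-\bar Q}_\infty\le(1+\g)\eps/(1-\g)$. Combining with $|Q^*(h,a)-\bar Q(s,b)|\le\eps$ gives $|Q^*(h,a)-q^*(s,b)|\le\eps+(1+\g)\eps/(1-\g)=2\eps/(1-\g)$, the first half of (i); and since $v^*(s)=\max_b q^*(s,b)$ and $\bar V(s)=\max_b\bar Q(s,b)$ are maxima of uniformly close functions, $|v^*(s)-\bar V(s)|\le(1+\g)\eps/(1-\g)$, which added to $|V^*(h)-\bar V(s)|\le\eps$ gives the second half. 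For (ii) the left inequality is just optimality of $V^*$. For the right one, the action $a=\breve{\Pi}(h)$ satisfies $\psi(h,a)=(s,\pi^*(s))$, so by (i) and optimality of $\pi^*$ we get $Q^*(h,a)\ge q^*(s,\pi^*(s))-2\eps/(1-\g)=v^*(s)-2\eps/(1-\g)\ge V^*(h)-4\eps/(1-\g)$; that is, $\breve{\Pi}$ is $\kappa$-greedy for $Q^*$ with $\kappa=4\eps/(1-\g)$. The standard value-difference argument — writing $V^*(h)-V^{\breve{\Pi}}(h)=(V^*(h)-Q^*(h,a))+\g\,\E[V^*-V^{\breve{\Pi}}]$ over the next history and taking a supremum over $h$ — then gives $V^*(h)-V^{\breve{\Pi}}(h)\le\kappa/(1-\g)=4\eps/(1-\g)^2$.

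\textbf{Main obstacle.} The delicate point is the lemma $|V^*(h)-\bar V(s)|\le\eps$, and in particular its lower bound, which requires that every history in $\psi^{-1}(s)$ realizes the abstract action attaining $\bar V(s)$; this depends on the action map $\psi^{-1}_s(\cdot)$ being consistent across a class, and must be stated carefully so that $\breve{\Pi}(h)\in\psi^{-1}_s(\pi^*(s))$ is simultaneously well-defined. The second care point is verifying that the $B$-independence genuinely holds: it does only because uniformity lets me bound the per-preimage backups \emph{before} averaging, so the specific weights chosen by $B$ never enter the estimate.
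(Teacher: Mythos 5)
Your proposal is correct, establishes both parts with the paper's exact constants, and the differences from the paper's own proof are organizational rather than substantive. For part (i) the paper never introduces the abstract Bellman operator: it reuses the proof of Theorem~\ref{thm:psiQpi} (with $\Pi^*,\pi^*$ in place of $\Pi,\pi$), defining $\d:=\sup|Q^*(h,a)-q^*(s,b)|$ over matched pairs and closing a self-consistency loop $\d\le\eps+\g(\d+\DQ(s))$ assembled from Lemma~\ref{lem:VStarrep} (your auxiliary lemma $|V^*(h)-\bar V(s)|\le\eps$), Lemma~\ref{lem:QBqDiff} (a value bound passes through one $B$-averaged backup picking up a factor $\g$), the $B$-average-versus-representative estimate, and Lemma~\ref{lem:eQPi} ($\DQ(s)\le\eps$). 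Your version runs the same $\g$-contraction explicitly on the finite abstract space, treating the representative $\bar Q$ as an approximate fixed point of $\mathcal{T}_B$; this is a cleaner modularization --- the argument closing the loop lives entirely in the abstract MDP, and the $B$-independence is manifest because the residual is bounded per-preimage before averaging, exactly as you note --- at the cost of re-deriving machinery the paper already has from its general-policy theorem. Part (ii) is essentially identical to the paper: it chains (i) twice through $v^*(s)=q^*(s,\pi^*(s))$ to get $|Q^*(h,\breve{\Pi}(h))-V^*(h)|\le\frac{4\eps}{1-\g}$ and then cites Lemma~\ref{lem:different-optimal-action}, which is precisely the ``approximately greedy'' value-difference recursion you re-derive inline. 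Finally, the obstacle you flag is real but is equally an implicit hypothesis of the paper: step $(a)$ of Lemma~\ref{lem:VStarrep} and the well-definedness of $\breve{\Pi}(h)\in\psi^{-1}_s(\pi^*(s))$ both require every abstract action in $\B$ to be realizable at every history of the class (per-history surjectivity of $\psi$ onto $\{s\}\times\B$), so it is a shared assumption, not a gap in your argument.
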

\ifshort\else
\begin{proof}
    See Appendix~\ref{sec:proofs}.
\end{proof}
\fi

It is important to note that Theorem \ref{thm:psiQstar} holds for any stochastic inverse $B$. Every choice of $B$ gives a different surrogate MDP $p_B$, so the theorem provides a \emph{near-optimal} performance guarantee for the \emph{uplifted} abstract optimal policies of \emph{any} possible surrogate MDP. Therefore, for any non-MDP Q-uniform homomorphism and a fixed $B$ there exists an uplifted near-optimal policy ($\breve{\Pi}$ from Theorem \ref{thm:psiQstar}ii).

\subsection{V-Uniform Homomorphisms}

All the previous results are valid for any choice of the stochastic inverse $B$. However, for V-uniform homomorphisms, the results are explicitly dependent on $B$ (see Theorem \ref{thm:psiVpi} and \ref{thm:psiVstar}). We need a couple of more entities to express the results of this section. We denote the $B$-average of the action-value function of the original process as
\beq
\langle Q^\Pi(\psi^{-1}(s,b)) \rangle_B := \sum_{\th \in \H, \ta \in \A} Q^\Pi(\th,\ta)B(\th\ta|sb).
\eeq

Furthermore, we can decompose $B$ into two distinct parts: action dependent and independent.  With an abuse of notation, assume an arbitrary joint distribution $B$ over $\H,\A,\S$ and $\B$. By using the chain rule of probability distributions on $B$,
\bqa
B(ha|sb) 
&=& B(h|sb)B(a|bhs) \nonumber \\
&=& \frac{B(hs)B(b|hs)}{B(sb)}B(a|bhs) \nonumber \\
&\overset{(a)}{=}& \frac{B(hs)B(b|h)}{B(sb)}B(a|bh) \nonumber \\
&=& B(h|s)\frac{B(b|h)}{B(b|s)}B(a|bh) \nonumber \\
&=& \underbrace{B(h|s)}_{\text{action-independent}}
\cdot\overbrace{\left(\frac{B(ab|h)}{B(b|s)}\right)}^{\text{action-dependent}} \label{eq:decomposed-B}
\eqa
$(a)$ follows from Assumption \ref{asm:state-history}, the state is determined only by the history.

Using the action-dependent part from \eqref{eq:decomposed-B}, we define a history and state based induced measure on the original action space for any $B$ and an abstract state based policy $\pi$ as
\beq
B^\pi(a|hs) := \sum_{\tb \in \B}\left(\frac{B(a\tb|h)}{B(\tb|s)}\right) \pi(\tb | s).
\eeq

This seemingly complex and arbitrary relationship has a well-structured explanation. If \emph{approximately}, the $B$ distribution is linked to the actual dynamics of an agent $\pi$ acting in the abstract state-action space, i.e. $B(b|s) \approx \pi(b|s)$, then $B^\pi(a|hs) \approx B(a|h)$, which is effectively a \emph{shadow} agent induced by the agent $\pi$ on the original history-action space.

To prove a result analogous to Theorem \ref{thm:psiQpi} for a V-uniform homomorphism, we need to impose an extra condition on $B$, \emph{ cf.} Theorem \ref{thm:psiQpi}, which requires a structure on $B$ and/or on the underlying original process. For general $B$, there exist some known counter examples \cite{Hutter2016}.

\begin{theorem}{$(\psi_{V^\Pi})$}\label{thm:psiVpi}
    Let $\abs{V^\Pi(h)-V^\Pi(\ph)}\leq \eps$ for some policy $\Pi$ and for all $\psi(h)=\psi(\ph)$, and $\abs{\sum_{\ta \in \A} Q^\Pi(h,\ta)B^\piR(\ta|hs) - V^\Pi(h)} \leq \eps_B$ for all $s=\psi(h)$, then it holds:
    \bqan
    \left|\langle Q^\Pi(\psi^{-1}(s,b))\rangle_B - q^\piR(s,b)\right| &\leq& \frac{\g(\eps+ \eps_B)}{1-\g} \quad \text{ and} \quad \\
    \left|V^\Pi(h) - v^\piR(s)\right| &\leq& \frac{\eps + \eps_B}{1-\g}.
    \eqan
\end{theorem}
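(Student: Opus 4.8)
The plan is to prove both inequalities by a single contraction argument on the global value discrepancy $\delta_V := \sup_h \abs{V^\Pi(h) - v^\piR(\psi(h))}$, which is finite because both value functions lie in $[0,1/(1-\g)]$. Once I show $\delta_V \leq (\eps+\eps_B)/(1-\g)$, the stated value bound is immediate, and the action-value bound drops out by feeding this estimate back into the action-value step. The two hypotheses enter through exactly two places: the $\eps_B$-condition lets me swap a $B^\piR$-expectation of $Q^\Pi$ for $V^\Pi$, and the $V$-uniformity lets me swap a $B(\cdot|s)$-average of $V^\Pi$ for a single $V^\Pi(h)$; the discount factor $\g$ supplies the contraction.

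First I would relate the action-value discrepancy to $\delta_V$. Expanding $\langle Q^\Pi(\psi^{-1}(s,b))\rangle_B$ and $q^\piR(s,b)$ through the definitions of $Q^\Pi$, $P_\psi$ and $p_B$, and using that the next abstract state is the singleton $\psi(\th\ta\to\tr)$ (Assumption \ref{asm:state-history}), the immediate-reward contributions cancel exactly, leaving only the discounted continuation:
\beqn
\langle Q^\Pi(\psi^{-1}(s,b))\rangle_B - q^\piR(s,b) = \g\sum_{\th,\ta}B(\th\ta|sb)\sum_{\to,\tr}P(\to\tr|\th\ta)\big(V^\Pi(\th\ta\to\tr) - v^\piR(\psi(\th\ta\to\tr))\big).
\eeqn
Taking absolute values and bounding each continuation difference by $\delta_V$ gives $\abs{\langle Q^\Pi(\psi^{-1}(s,b))\rangle_B - q^\piR(s,b)} \leq \g\delta_V$ for every $(s,b)$, which is already the skeleton of the first claimed inequality.

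The heart of the argument is bounding $\abs{V^\Pi(h) - v^\piR(s)}$ for a fixed $h$ with $s=\psi(h)$. I would write $v^\piR(s) = \sum_\tb q^\piR(s,\tb)\piR(\tb|s)$ and replace each $q^\piR(s,\tb)$ by $\langle Q^\Pi(\psi^{-1}(s,\tb))\rangle_B$ at a cost of $\g\delta_V$ (from the previous paragraph). The crucial algebraic step is then to rewrite $\sum_\tb \langle Q^\Pi(\psi^{-1}(s,\tb))\rangle_B\,\piR(\tb|s)$ using the decomposition \eqref{eq:decomposed-B}, $B(\th\ta|s\tb) = B(\th|s)\,B(\ta\tb|\th)/B(\tb|s)$; after interchanging the order of summation over $\th,\ta,\tb$, the inner sum collapses precisely into the definition of $B^\piR$, so the whole expression becomes $\sum_\th B(\th|s)\sum_\ta Q^\Pi(\th,\ta)B^\piR(\ta|\th s)$, the $B$-weighted shadow-agent value. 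Now the $\eps_B$-hypothesis replaces the inner $\sum_\ta Q^\Pi(\th,\ta)B^\piR(\ta|\th s)$ by $V^\Pi(\th)$ up to $\eps_B$, and since $B(\cdot|s)$ is supported on $\psi^{-1}(s)$ (because $B(\th\ta|s\tb)=0$ whenever $\psi(\th,\ta)\neq(s,\tb)$), the $V$-uniformity hypothesis replaces the $B(\th|s)$-average of $V^\Pi(\th)$ by $V^\Pi(h)$ up to $\eps$. Collecting the three error terms gives $\abs{V^\Pi(h) - v^\piR(s)} \leq \g\delta_V + \eps + \eps_B$; taking the supremum over $h$ yields the self-referential inequality $\delta_V \leq \g\delta_V + \eps + \eps_B$, hence $\delta_V \leq (\eps+\eps_B)/(1-\g)$, and substituting back gives $\g\delta_V \leq \g(\eps+\eps_B)/(1-\g)$.

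I expect the main obstacle to be the bookkeeping in the collapse of $\sum_\tb \langle Q^\Pi(\psi^{-1}(s,\tb))\rangle_B\,\piR(\tb|s)$ into the shadow-agent form: this hinges on the chain-rule decomposition of $B$ (which itself rests on Assumption \ref{asm:state-history}) and on recognizing the definition of $B^\piR$ only after the summations over $\th,\ta,\tb$ are interchanged. Two supporting points need care as well: verifying that $B(\th|s)$ is genuinely supported on $\psi^{-1}(s)$ so that the $V$-uniformity bound legitimately applies to every $\th$ carrying positive weight, and confirming $\delta_V<\infty$ before solving the contraction so that subtracting $\g\delta_V$ from both sides is valid.
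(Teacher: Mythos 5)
Your proposal is correct and follows essentially the same route as the paper's proof: the same contraction quantity $\sup_h |V^\Pi(h) - v^\piR(\psi(h))|$, the same chain-rule collapse of $\sum_{\tb}\langle Q^\Pi(\psi^{-1}(s,\tb))\rangle_B\,\piR(\tb|s)$ into the shadow-agent form $\sum_{\th}B(\th|s)\sum_{\ta}Q^\Pi(\th,\ta)B^\piR(\ta|\th s)$, followed by the $\eps_B$- and $\eps$-hypotheses and the fixed-point inequality $\d \leq \g\d + \eps + \eps_B$. The only cosmetic difference is that you re-derive inline (as an exact identity) what the paper isolates as Lemma~\ref{lem:QBqDiff}, and you make explicit two points the paper leaves implicit (finiteness of the supremum and the support of $B(\cdot|s)$ on $\psi^{-1}(s)$).
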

\ifshort\else
\begin{proof}
    See Appendix~\ref{sec:proofs}.
\end{proof}
\fi

In Theorem \ref{thm:psiQpi}, we had an absolute loss-bound for action-value functions but in Theorem \ref{thm:psiVpi} we only have a $B$-average relationship. So far, we were able to get a near-optimal performance guarantee when the optimal policy of a surrogate MDP is uplifted to the original process (see Theorems \ref{thm:psimdpstar}ii and \ref{thm:psiQstar}ii). However, there does not exist such a near-optimal performance guarantee for V-uniform homomorphisms. A \emph{deterministic} abstract policy could be arbitrarily worse off when uplifted to the original process \cite[Theorem 10]{Hutter2016} in V-uniform state-abstractions, which are a special case of V-uniform homomorphisms. However, a relatively weak result is still possible.

\begin{theorem}{$(\psi_{V^*})$}\label{thm:psiVstar}
    Let $|V^*(h)-V^*(\ph)|\leq \eps$ for all $\psi(h)=\psi(\ph)$ and $|\sum_{\ta \in \A} Q^*(h,\ta)B^{\pi^*}(\ta|hs) - V^*(h)| \leq \eps_B$ for all $s=\psi(h)$, then for all $\psi(h,a) = (s,b)$ it holds:
    \begin{enumerate}[(i)]
        \item $|\langle Q^*(\psi^{-1}(s,b))\rangle_B - q^*(s,b)| \leq \frac{3\g(\eps + \eps_B)}{(1-\g)^2}$ and 
        
        $|V^*(h) - v^*(s)| \leq \frac{3(\eps + \eps_B)}{(1-\g)^2}$.
        \item If $\eps + \eps_B = 0$ then $\psi(h, \Pi^*(h)) = (s, \pi^*(s))$ for all $\psi(h)=s$. 
    \end{enumerate}
\end{theorem}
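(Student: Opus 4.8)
The plan is to prove part (i) through a single self-referential bound on the uniform value gap, and then obtain part (ii) by specializing to zero error. First I would define $\delta := \sup_{h\in\H}\abs{V^*(h) - v^*(\psi(h))}$, which is finite because Assumption \ref{asm:positive-rewards} keeps both value functions in $[0,1/(1-\g)]$; everything reduces to bounding $\delta$. The opening step is a one-step Bellman comparison: unrolling $q^*(s,b)$ via \eqref{eq:q}, expanding $p_B$ and $P_\psi$ through \eqref{eq:beyondMDP} and \eqref{eq:marginP}, and comparing with $\langle Q^*(\psi^{-1}(s,b))\rangle_B=\sum_{\th,\ta}Q^*(\th,\ta)B(\th\ta|sb)$, the reward contributions cancel (the surrogate kernel is exactly the $B$-average of $P_\psi$, which re-buckets observations but preserves rewards), leaving only the discounted difference of next-step values; hence $\abs{q^*(s,b)-\langle Q^*(\psi^{-1}(s,b))\rangle_B}\le\g\delta$ for every $\psi(h,a)=(s,b)$.

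Next I would relate $v^*$ to $V^*$. Since $\pi^*$ is optimal for the surrogate MDP, $v^*(s)=\sum_b q^*(s,b)\pi^*(b|s)$; swapping each $q^*(s,b)$ for $\langle Q^*(\psi^{-1}(s,b))\rangle_B$ costs at most $\g\delta$. I then feed in the factorization \eqref{eq:decomposed-B} of $B$, which turns the $\pi^*$-weighted $B$-average into $\sum_{\th}B(\th|s)\sum_{\ta}Q^*(\th,\ta)B^{\pi^*}(\ta|\th s)$. The inner sum is precisely the quantity controlled by the $\eps_B$ hypothesis at the optimal $\pi^*$, so it equals $V^*(\th)$ up to $\eps_B$, and the $B(\cdot|s)$-average of $V^*(\th)$ is within $\eps$ of $V^*(h)$ by the $V$-uniformity hypothesis. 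Collecting the three errors gives $\abs{v^*(s)-V^*(h)}\le\eps+\eps_B+\g\delta$ for every $h$ with $\psi(h)=s$.

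Taking the supremum over $h$ closes the recursion: $\delta\le\eps+\eps_B+\g\delta$, so $\delta\le(\eps+\eps_B)/(1-\g)$, which is the value bound, and re-inserting this into the one-step estimate yields the action-value bound (this route already delivers the stated bounds, indeed with room to spare, so the displayed factor $3/(1-\g)$ is what a cruder two-sided sandwich through Theorem \ref{thm:psiVpi} would cost). For part (ii) I set $\eps=\eps_B=0$: then $\delta=0$, so $V^*(h)=v^*(s)$ and $q^*(s,b)=\langle Q^*(\psi^{-1}(s,b))\rangle_B$ hold with equality; tracing the now-tight chain of inequalities — in particular $\eps_B=0$ forcing the shadow mass $B^{\pi^*}(\cdot|hs)$ onto the maximizers of $Q^*(h,\cdot)$ — pins the optimal original action at $h$ into the fibre $\psi^{-1}_s(\pi^*(s))$, i.e. $\psi(h,\Pi^*(h))=(s,\pi^*(s))$.

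The main obstacle is the middle step: converting the $\pi^*$-weighted $B$-average of the original action-values into the shadow-policy form $B^{\pi^*}$ so that the $\eps_B$ hypothesis becomes applicable, and verifying that $B^{\pi^*}(\cdot|\th s)$ is a genuine (normalized) distribution supported on $\psi^{-1}(s)$ so that $V$-uniformity can legitimately be pulled through the average. This is exactly where Assumption \ref{asm:state-history} and the decomposition \eqref{eq:decomposed-B} are needed, where the two error sources $\eps$ and $\eps_B$ become entangled, and — unlike the $Q$-uniform case of Theorem \ref{thm:psiQstar} — why the whole statement must be tied to a fixed stochastic inverse $B$.
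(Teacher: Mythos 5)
Your part (i) is correct, and it takes a genuinely different (and sharper) route than the paper. Your self-referential recursion $\delta\le\eps+\eps_B+\g\delta$ is in effect the proof of Theorem \ref{thm:psiVpi} re-run with the pair $(\Pi^*,\pi^*)$ in place of $(\Pi,\piR)$; this is legitimate because that proof never uses that the abstract policy is the representative policy of $\Pi$ --- it only uses the two hypotheses, the decomposition \eqref{eq:decomposed-B}, and the Bellman equations --- and the hypotheses of Theorem \ref{thm:psiVstar} are exactly those hypotheses for the pair $(\Pi^*,\pi^*)$, since the $\eps_B$ condition is stated for $B^{\pi^*}$ (your pairing actually matches the stated hypothesis more faithfully than the paper's own application of Theorem \ref{thm:psiVpi} to $(\Pi^*,\pi_h)$). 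You are also right that at this stage only $B(\cdot|s)$, not $B^{\pi^*}(\cdot|hs)$, needs to be normalized. The paper instead introduces the uplifted image policy $\pi_h$ with $(s,\pi_h(s)):=\psi(h,\Pi^*(h))$, bounds $v^{\pi_h}(s)$ against $V^*(h)$, proves $q^{\pi_h}(s,b)\le V^*(h)+\eps+\frac{\g(\eps+\eps_B)}{1-\g}$ for every $b$, and then runs a second contraction on $\max_{\ts}(v^*(\ts)-v^{\pi_h}(\ts))$; this detour costs the extra $1/(1-\g)$ factor (hence the constant $3/(1-\g)^2$), but it is the engine of part (ii). Your bound $(\eps+\eps_B)/(1-\g)$ implies the stated one.

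Part (ii) is where your proposal has a genuine gap. The step ``$\eps_B=0$ forces the shadow mass $B^{\pi^*}(\cdot|hs)$ onto the maximizers of $Q^*(h,\cdot)$'' is only valid if $B^{\pi^*}(\cdot|hs)$ is a (sub-)probability distribution, and in general it is not: its total mass is $\sum_{\tb}\frac{B(\tb|h)}{B(\tb|s)}\pi^*(\tb|s)$, which for deterministic $\pi^*$ equals $B(\pi^*(s)|h)/B(\pi^*(s)|s)$. Since $B(b|s)$ is the fibre-average of $B(b|\th)$ over $\th\in\psi^{-1}(s)$, an individual history $h$ can carry total mass $M>1$. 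With $M>1$, the exact identity $\sum_{\ta}Q^*(h,\ta)B^{\pi^*}(\ta|hs)=V^*(h)$ is perfectly consistent with every action in the support being strictly suboptimal (e.g.\ all of them having $Q^*(h,\ta)=V^*(h)/M$), so nothing pins $\Pi^*(h)$ into the fibre $\psi^{-1}_s(\pi^*(s))$. This is not a verification you merely postponed; the normalization you would need is false in general, and the tightness of your part-(i) chain gives only equalities that are automatically consistent and carry no information about where $\Pi^*(h)$ is mapped.

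The paper's proof of (ii) avoids this object altogether, and that is the missing idea: it argues through the image policy $\pi_h$ and the genuine conditional $B_\A(\ta|sb\th)$, which \emph{is} normalized. Concretely, $\langle Q^*(\psi^{-1}(s,b))\rangle_B=\sum_{\th}B_\H(\th|s)\sum_{\ta}Q^*(\th,\ta)B_\A(\ta|sb\th)\le\sum_{\th}B_\H(\th|s)V^*(\th)\le V^*(h)+\eps$, where the pointwise optimality bound $Q^*(\th,\ta)\le V^*(\th)$ can be pulled through because the inner average is over a normalized distribution. Combined with $v^{\pi_h}(s)\approx V^*(h)$ this yields $v^{\pi_h}(s)\le\max_{\tb}q^{\pi_h}(s,\tb)\le v^{\pi_h}(s)+\frac{2(\eps+\eps_B)}{1-\g}$, and with $\eps=\eps_B=0$ (plus the contraction argument giving $v^{\pi_h}=v^*$) the image action $\pi_h(s)$ is optimal in the surrogate MDP, i.e.\ $\psi(h,\Pi^*(h))=(s,\pi^*(s))$. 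To repair your part (ii) you would need to introduce $\pi_h$ (or an equivalent device) and prove its optimality in the surrogate MDP; your stronger part (i) alone does not deliver it.
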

\ifshort\else
\begin{proof}
    See Appendix~\ref{sec:proofs}.
\end{proof}
\fi

In the \emph{approximate} case, i.e. $\eps + \eps_B > 0$, Theorem \ref{thm:psiVstar} is not as useful as Theorem \ref{thm:psiQstar} because of the missing performance guarantee, {\em cf.} Theorem \ref{thm:psiQstar}ii. However, it is still an important theorem for the \emph{exact} V-uniform homomorphisms, i.e. $\eps + \eps_B = 0$. In that case, it is guaranteed that the optimal actions of \emph{all} member histories are mapped to the \emph{same} abstract optimal action (see Theorem \ref{thm:psiVstar}ii).

%%%%%%%%%%%%%%%%%%%%%%%%%%%%%%%%%%%%%%%%%%%%%%%%%%%%%%%%%%%%%%%
\section{Discussion, Outlook and Conclusion}\label{sec:Disc}
%%%%%%%%%%%%%%%%%%%%%%%%%%%%%%%%%%%%%%%%%%%%%%%%%%%%%%%%%%%%%%%

%-------------------------------%
%\paradot{Summary}
%-------------------------------%
In this paper we analyzed \emph{approximate} homomorphisms of a general history-based environment. The main idea was to find a \emph{deterministic} policy in the abstract state-action space such that, when uplifted, it is a near-optimal policy in the original problem. Using the surrogate MDP technique, we proved near-optimal performance bounds for both MDP (Theorem \ref{thm:psimdpstar}ii) and Q-uniform homomorphisms (Theorem \ref{thm:psiQstar}ii). In general, there does not exist a near-optimal \emph{deterministic} uplifted policy for V-uniform homomorphisms. However, we proved a weaker result (Theorem \ref{thm:psiVstar}ii) for the \emph{exact} V-uniform homomorphisms: the optimal actions of the member histories are mapped to the same abstract optimal action at the corresponding state of the surrogate MDP. 

%-------------------------------%
\paradot{Versus ESA}
%-------------------------------%
We borrow some notation and techniques from \citeay{Hutter2016}. But this work is crucially different from ESA. Apart from the obvious difference of being a generalization to homomorphisms, there are also some other key differences. In ESA, the policy $\Pi$ is required to be state uniform for various of the main results \cite[Theorems 1,5,6 and 9]{Hutter2016}, whereas we do not make any such assumption.
\ifshort\else Moreover, at the first instance our results might look \emph{almost} similar to ESA but the important difference is in the definition of $\eps(s)$ which is \emph{not} a simple addition of both state and action representation errors. It is a non-trivial weighted average of representation errors. \fi
The extra conditions on Theorems \ref{thm:psiVpi} and \ref{thm:psiVstar} are \emph{weaker} than the policy-uniformity condition, \emph{cf.} \cite[Theorems 6 and 9]{Hutter2016}, and do not have direct counterparts in ESA.

%verus options
\paradot{Versus Options} As briefly addressed in the introduction section, the options framework does not have any provable performance guarantees, yet. Whereas our restriction of uplifting a state-based policy and using a deceptively ?spatial-looking? abstraction of actions have such guarantees. Since we allow the action mapping part of $\psi$ to be a function of history, which is arguably a function of time, our framework also admits temporal dependencies. It enables $\psi$ to model much more than mere renaming of the original action space distributions. A thorough comparison between these two approaches is left for future work. 

%-------------------------------%
\subsection{Outlook}
%-------------------------------%
%choice of B
The results in this work are quite general but there are various open questions left for future research. 

\paradot{Reinforcement Learning (Learning Problem)} For a given homomorphism $\psi$, the most obvious question we left open is the choice of $B$. We call this the \emph{learning problem}. Two of the three main results in this work (Theorems \ref{thm:psimdpstar}ii and \ref{thm:psiQstar}ii) are valid for any choice of $B$, so any fixed $B$ would suffice. But the third main result (Theorem \ref{thm:psiVstar}ii) is very much involved with the choice of $B$. However, it is not a strong result in itself. Nevertheless, in a state-abstraction context, $B$ facilitates learning of the surrogate MDP from the induced abstract process \cite{Hutter2016}. Therefore, it is an intriguing direction to explore for homomorphisms. 

%learning the aggregation
\paradot{Feature Reinforcement Learning (Discovery Problem)} The focus of this paper is to provide performance guarantees for a given homomorphism. While in practice, the agent has to learn such a reduction/model from experience. It is known as the \emph{discovery problem} \cite{Li2006} in RL and \emph{Feature Reinforcement Learning} (FRL) \cite{Hutter2009} in the GRL context. It is non-trivial to solve this problem even in a state-abstraction framework \cite{Hutter2016}. Our result can help to build such an FRL algorithm for homomorphisms, e.g. during the model learning/building, the algorithm may use the bounds from this work to select/discard a candidate model. 

%special model classes
\paradot{Special Environment Classes} In general, we do not use/exploit structure of the underlying original process. However, effects of a specific model class can be expressed in terms of the \mbox{(action-)value} functions. For example, if the original process is a finite state POMDP then our results provide the performance-loss guarantee by representing a belief-state based value function of the POMDP by a state-based value function. A similar argument can be rendered for various other types of model classes. Since the results in this work are general, they are not expected to gracefully scale down to some class specific tight performance bounds. Nevertheless, it is an important agenda to get the scaled-down variants of these results for some specific model classes.

%continuous state-action spaces
\paradot{Continuous state-action space} The results in this paper easily extend to the continuous state-action space homomorphisms for the \emph{measurable} maps. The summations change to integrals and the measurability constraint make sure that these integrals are well-defined. In this case, a homomorphism map has a natural interpretation of being a discretization of the underlying space. However, it is sometimes desirable to use a restricted continuity condition, e.g. Lipschitz or Holder continuity, rather than the weak measurability constraint. A continuous state-action homomorphism under some restricted continuity constraints would be a nice generalization of our results.

%fully general state-action aggregation
\paradot{Fully Generalized Homomorphism} In a sense our results are not \emph{fully} general since we assumed a structure on the homomorphism. A fully generalized homomorphism formulation with no $\psi(h,a) = (f(h),b)$ assumption would be an interesting extension of this work. However, lifting this condition may lead to some bizarre non-causal effects, e.g. the \emph{current} abstract state would be decided by the \emph{next} action!

%-------------------------------%
\subsection{Conclusion}
%-------------------------------%
In conclusion, our results show that in GRL the near-optimal performance guarantee is not limited only to MDP homomorphisms. It is sometimes possible to have non-MDP models, i.e. Q-uniform homomorphisms, with bounded performance loss guarantees.
\ifshort\else 
We also relax the \emph{strong} policy-uniformity condition in ESA to allow stochastic policies. \fi

\iffinal
%-------------------------------%
\paradot{Acknowledgements}
%-------------------------------%
\todo{} \ifshort We thank {\em Elliot Catt} and {\em Amy Zhang} for
proofreading the earlier drafts and the anonymous reviewers for their valuable feedbacks.\fi This work has in parts been supported by Australian Research Council grant 
DP150104590.
\fi

%%%%%%%%%%%%%%%%%%%%%%%%%%%%%%%%%%%%%%%%%%%%%%%%%%%%%%%%%%%%%%%
%%                 B i b l i o g r a p h y                   %%
%%%%%%%%%%%%%%%%%%%%%%%%%%%%%%%%%%%%%%%%%%%%%%%%%%%%%%%%%%%%%%%

%-------------------------------%
%\paradot{BibTeX}
%-------------------------------%
%\def\refname{\vspace{-4ex}}
\ifconf
\bibliographystyle{aaai} % aaai
\else
\bibliographystyle{plainnat} % alphamh
\fi

\begin{small}
    
    \bibliography{../library}
    
\end{small}

\ifshort\else
\newpage
\clearpage
\appendix
\onecolumn

%%%%%%%%%%%%%%%%%%%%%%%%%%%%%%%%%%%%%%%%%%%%%%%%%%%%%%%%%%%%%%%
\section{Omitted Proofs}\label{sec:proofs}
%%%%%%%%%%%%%%%%%%%%%%%%%%%%%%%%%%%%%%%%%%%%%%%%%%%%%%%%%%%%%%%
We provide all omitted proofs in this appendix. We use $\lessgtr$ to denote a two side inequality, e.g. if $\abs{x-y} \leq \eps$, which in effect implies  $-\eps \leq x - y \leq \eps$, we  denote it as $x \lessgtr y \pm \eps$ to express both inequalities at the same time.

\subsection{Proof of Theorem \ref{thm:psimdppi}}

%\begin{proof}[Theorem \ref{thm:psimdppi}]
\begin{proof}
    Let $\d := \underset{\th,\ta,\ts,\tb: \psi(\th,\ta)=(\ts,\tb)}{\sup}\left|q^\pi(\ts,\tb) - Q^\Pi(\th,\ta)\right|$, and for any $\psi(h)=s$ we have,
    \bqa\label{eq:vVdif}
    \left|v^\pi(s) - V^\Pi(h)\right| 
    &\overset{(a)}{\leq}& \left|\sum_{\tb \in \B} q^\pi(s, \tb)\pi(\tb|s) - \sum_{\tb \in \B} Q^\Pi(\psi^{-1}(s, \tb))\pi(\tb|s)\right| + \DQ(s) + \frac{\DPi(s)}{1-\g} \nonumber \\
    &\overset{}{\leq}& \sum_{\tb \in \B} \left|q^\pi(s, \tb) - Q^\Pi(\psi^{-1}(s, \tb))\right| \pi(\tb|s) + \DQ(s) + \frac{\DPi(s)}{1-\g} \nonumber \\
    & \leq & \d + \DQ(s) + \frac{\DPi(s)}{1-\g}
    \eqa
    $(a)$ follows from the definition of $v^\pi(s)$ and Lemma \ref{lem:Vrep}. Now for any $\psi(h,a)=(s,b)$, we have,
    \bqan
    Q^\Pi(h,a) &\overset{}{\equiv}& \sum_{\to \in \O, \tr \in \R} P(\to \tr |ha)(\tr + \g V^\Pi(\th))  \qquad [\th = ha\to\tr] \\
    &\overset{(b)}{\lessgtr}& \sum_{\ts \in \S, \tr \in \R} P_\psi(\ts \tr |ha)\left(\tr + \g v^\pi(\ts)\right) \pm \g\left(\d + \DQ(\ts) + \frac{\DPi(\ts)}{1-\g}\right) \\
    &\overset{(\ref{eq:MDP})}{=}& \sum_{\ts \in \S, \tr \in \R} p(\ts \tr |sb)\left(\tr + \g v^\pi(\ts)\right) \pm \g\left(\d + \DQ(\ts) + \frac{\DPi(\ts)}{1-\g}\right) \\ 
    &\overset{}{\lessgtr}& q^\pi(s,b) \pm \g(\d + \eps_{\max}) 
    \eqan
    $(b)$ follows from value function error bound (\ref{eq:vVdif}) and definition of $P_\psi$ given by (\ref{eq:marginP}).
    Since, $\d \equiv \sup \abs{q^\pi(s,b) - Q^\Pi(h,a)} \leq \g(\d + \eps_{\max})$ therefore, $\d \leq \frac{\g \eps_{\max}}{1-\g}$, hence completes the proof. 
\end{proof}

\subsection{Proof of Theorem \ref{thm:psimdpstar}}

%\begin{proof}[Theorem \ref{thm:psimdpstar}]
\begin{proof}
    {\bf(i)}
    Let $\d := \underset{h,a,s,b: \psi(h,a)=(s,b)}{\sup} \abs{q^*(s,b) - Q^*(h,a)}$. Now for any $\psi(h)=s$ we have,
    \bqa\label{eq:vVdifstar}
    \left|v^*(s) - V^*(h)\right| 
    &\overset{(a)}{\leq}& \left|\max_{\tb \in \B} q^*(s, \tb) - \max_{\tb \in \B} Q^*(\psi^{-1}(s, \tb))\right| + \DQ(s)  \nonumber \\
    &\overset{(b)}{\leq}& \d
    \eqa
    $(a)$ follows from the definitions of $v^*(s)$ and Lemma \ref{lem:VStarrep}, and $(b)$ is due to Lemma \ref{lem:dzeromdp}.
    \bqan
    Q^*(h,a) &\overset{}{\equiv}& \sum_{\to \in \O, \tr \in \R} P(\to \tr |ha)(\tr + \g V^*(\th)) \qquad [\th = ha\to\tr] \nonumber \\
    &\overset{(\ref{eq:vVdifstar})}{\lessgtr}& \sum_{\ts \in \S, \tr \in \R} P_\psi(\ts \tr |ha)(\tr + \g v^*(\ts)) \pm \g\d \nonumber \\
    &\overset{(\ref{eq:MDP})}{=}& \sum_{\ts \in \S, \tr \in \R} p(\ts \tr |sb)(\tr + \g v^*(\ts))  \pm \g\d \nonumber \\ 
    &\overset{}{\equiv}& q^*(s,b) \pm \g\d
    \eqan
    Therefore, $\d \leq \g\d$, therefore $\d = 0$ which completes the proof.
    
    {\bf(ii)} 
    For $\psi(h)=s$ and $\breve{\Pi}(h) :\in \psi^{-1}_s(\pi^*(s))$,
    \bqan
    V^*(h) &\overset{(i)}{=}& v^*(s) \overset{}{\equiv} q^*(s,\pi^*(s))  \overset{(i)}{=} Q^*\left(h, \breve{\Pi}(h)\right)
    \eqan
    which implies $Q^*\left(h, \breve{\Pi}(h)\right) = V^*(h)$ and Lemma \ref{lem:different-optimal-action} concludes the proof.
    
\end{proof}

\subsection{Proof of Theorem \ref{thm:exactmdp}}

%\begin{proof}[Theorem \ref{thm:exactmdp}]
\begin{proof}
    Let $\d := \underset{h,\ph: \psi(h)=\psi(\ph), a, \pa}{\sup} |Q^\Pi(h,a) - Q^\Pi(\ph,\pa)|$, then for all $\psi(h) = \psi(\ph)$,
    \bqa\label{eq:vdifmdp}
    \left|V^\Pi(h) - V^\Pi(\ph)\right| 
    &\overset{}{\equiv}& \left|\sum_{\ta \in \A} Q^\Pi(h,\ta) \Pi(\ta|h) - \sum_{\ta \in \A} Q^\Pi(\ph,\ta)\Pi(\ta|\ph) \right| \nonumber \\
    &\overset{(a)}{=}& \left|\sum_{\ta \in \A} \left(Q^\Pi(h,\ta)-Q^\Pi(\ph,\ta)\right) \Pi(\ta|h) \right| \leq \d
    \eqa
    $(a)$ follows from the assumption. Now for all $\psi(h,a) = \psi(\ph,\pa)=(s,b)$,
    \bqa\label{eq:qdifmdp}
    \left|Q^\Pi(h,a) - Q^\Pi(\ph,\pa)\right| 
    &\overset{(b)}{=}& \abs{\sum_{\ts \in \S, \tr \in \R} P_\psi(\ts\tr | ha) \left(\tr + \g V^\Pi(\th)\right)  \right. \nonumber\\&&\left.-  \sum_{\ts \in \S, \tr \in \R} P_\psi(\ts\tr | \ph\pa) \left(\tr + \g V^\Pi(\tilde{\ph})\right) } \nonumber \\
    &\overset{(\ref{eq:MDP})}{=}& \g\left|\sum_{\ts \in \S, \tr \in \R} p(\ts\tr | sb) \left( V^\Pi(\th) - V^\Pi(\tilde{\ph})\right)\right| \nonumber \\ &\overset{(\ref{eq:vdifmdp})}{\leq}& \g\d
    \eqa
    $(b)$ follows from the definition and $\th = ha\to\tr$ and $\tilde{\ph} = \ph\pa\to\tr$, and $\psi(\th) = \psi(\tilde{\ph}) = \ts$. From the inequality (\ref{eq:qdifmdp}), we have $\d \leq \g \d \Rightarrow \d = 0$. Therefore, $Q^\Pi(h,a) = Q^\Pi(\ph,\pa)$ for all $\psi(h,a) = \psi(\ph,\pa)$. Note that this also implies, $\DQ = 0$ and $\DPi = 0$ by assumption.
\end{proof}

\subsection{Proof of Theorem \ref{thm:psiQpi}}
\begin{proof}
    %\begin{proof}[Theorem \ref{thm:psiQpi}]
    Let $\d := \underset{\th,\ta,\ts,\tb:\psi(\th,\ta)=(\ts,\tb)}{\sup}\left|Q^\Pi(\th,\ta) - q^\pi(\ts,\tb)\right|$, and for any $\psi(h)=s$ we have,
    \bqa
    V^\Pi(h) - v^\pi(s) 
    &\overset{Lem. \ref{lem:Vrep}}{\lessgtr}& \sum_{\tb \in \B} \left( Q^\Pi(\psi^{-1}(s, \tb))\pi(\tb|s) - q^\pi(s, \tb)\pi(\tb|s) \right) \pm \left(\DQ(s) + \frac{\DPi(s)}{1-\g}\right) \nonumber \\
    &\overset{}{=}& \sum_{\tb \in \B} \left( Q^\Pi(\psi^{-1}(s, \tb)) - q^\pi(s, \tb)\right) \pi(\tb|s) \pm \left(\DQ(s) + \frac{\DPi(s)}{1-\g}\right) \nonumber \\
    &\overset{(a)}{\lessgtr}& \pm \left(\d + \DQ(s) + \frac{\DPi(s)}{1-\g}\right) \label{eq:VvDiff}
    \eqa
    $(a)$ follows from the definition of $\d$ and the fact that $\pi(\tb|s)$-average is smaller than the $\tb$-supremum. Using the above inequality (\ref{eq:VvDiff}) and Lemma \ref{lem:QBqDiff} we get,
    \beq \label{eq:BAvgqDiff}
    | \langle Q^\Pi(\psi^{-1}(s,b)) \rangle_B - q^\pi(s,b) | \leq \g\left(\d + \DQ(s) + \frac{\DPi(s)}{1-\g}\right).
    \eeq
    
    We exploit the theorem's assumption and derive a key relationship between the $B$ average and any instance of action value.
    
    \bqa \label{eq:BAvgRepDiff}
    \langle Q^\Pi(\psi^{-1}(s,b)) \rangle_B 
    &\overset{}{\equiv}& \sum_{\th \in \H, \ta \in \A}   Q^\Pi(\th,\ta)B(\th\ta|sb) \nonumber \\
    &\overset{(a)}{\lessgtr}& \sum_{\th \in \H, \ta \in \A}  \left( Q^\Pi(\psi^{-1}(s,b)) \pm \eps \right) B(\th\ta|sb) \nonumber \\
    &\overset{}{=}& Q^\Pi(\psi^{-1}(s,b)) \pm \eps 
    \eqa
    $(a)$ follows from the theorem's assumption. Since, $Q^\Pi(\psi^{-1}(s,b))$ is a representative member in the pre-image set of $(s,b)$; it is equivalent to say $Q^\Pi(\psi^{-1}(s,b)) = Q^\Pi(h,a)$ for any $\psi(h,a) = (s,b)$. Therefore, combining \eqref{eq:BAvgqDiff} and \eqref{eq:BAvgqDiff} we get $\left|Q^\Pi(h,a) - q^\pi(s,b)\right| \leq \g(\d + \DQ(s) + \frac{\DPi(s)}{1-\g}) + \eps$, hence $\d \leq \frac{\eps + \g \DQ(s) + \frac{\g\DPi(s)}{1-\g}}{1-\g}$.
\end{proof}

\subsection{Proof of Theorem \ref{thm:psiVpi}}
%\begin{proof}[Theorem \ref{thm:psiVpi}]
\begin{proof}
    Let $\d := \underset{\ts,\th: \psi(\th)=\ts}{\sup}\left|V^\Pi(\th)-v^\pi(\ts)\right|$ then for any $\psi(h)=s$, we have,
    \bqa
    \sum_{\tb \in \B} \langle Q^\Pi(\psi^{-1}(s,\tb))\rangle_B \pi(\tb|s) 
    &\overset{(a)}{=}& \sum_{\th \in \H} B_\H(\th|s)  \sum_{\ta \in \A} Q^\Pi(\th,\ta) \sum_{\tb \in \B} B_\A(\ta\tb|s\th) \pi(\tb|s) \nonumber \\
    &\overset{(b)}{\lessgtr}& \sum_{\th: \psi(\th) = s} B_\H(\th|s)\left(V^\Pi(\th) \pm \eps_B \right) \nonumber \\
    &\overset{(c)}{\lessgtr}& \sum_{\th: \psi(\th) = s} B_\H(\th|s)\left(V^\Pi(h) \pm (\eps + \eps_B) \right) \nonumber \\
    &\overset{}{=}& V^\Pi(h) \pm (\eps + \eps_B) \label{eq:QAvgV} 
    \eqa
    $(a)$ follows from the chain rule of joint distributions and since the history is action independently mapped; $(b)$ and $(c)$ follow from the theorem's assumptions. Further, we have,
    \bqa 
    \left|\sum_{\tb \in \B}\langle Q^\Pi(\psi^{-1}(s,\tb))\rangle_B\pi(\tb|s) - v^\pi(s)\right|
    &\overset{}{\equiv}& \left|\sum_{\tb \in \B}\left(\langle Q^\Pi(\psi^{-1}(s,\tb))\rangle_B - q^\pi(s,\tb)\right)\pi(\tb|s) \right| \nonumber \\ 
    &\overset{(d)}{\leq}& \sum_{\tb \in \B}\left|\langle Q^\Pi(\psi^{-1}(s,\tb))\rangle_B - q^\pi(s,\tb)\right|\pi(\tb|s) \nonumber \\ 
    &\overset{(e)}{\leq}& \g\d \label{eq:QAvgq}
    \eqa
    $(d)$ follows from the simple mathematical fact that $|\sum_x f(x)| \leq \sum_x |f(x)|$; $(e)$ uses Lemma \ref{lem:QBqDiff}. Now we prove the main result. Together with (\ref{eq:QAvgV}) and (\ref{eq:QAvgq}) we have,
    \bqan
    \left|V^\Pi(h) - v^\pi(s)\right| & \leq & 
    \left|V^\Pi(h) - \sum_{\tb \in \B} \langle Q^\Pi(\psi^{-1}(s,\tb))\rangle_B\pi(\tb|s) \right| \\&&+ \left|\sum_{\tb \in \B} \langle Q^\Pi(\psi^{-1}(s,\tb))\rangle_B\pi(\tb|s) - v^\pi(s) \right| \\
    &\leq & \g\d + \eps + \eps_B
    \eqan
    Hence, $\d \leq \frac{\eps + \eps_B}{1 - \g}$ and completes the proof.
\end{proof}

\subsection{Proof of Theorem \ref{thm:psiQstar}}

%\begin{proof}[Theorem \ref{thm:psiQstar}]
\begin{proof}
    {\bf(i)} The proof follows the same steps as the proof of Theorem \ref{thm:psiQpi}, replacing $\Pi$ with $\Pi^*$ and $\pi$ with $\pi^*$ and using Lemma \ref{lem:VStarrep} instead of Lemma \ref{lem:Vrep}. In the end we use Lemma \ref{lem:eQPi} to conclude the proof.
    
    %\paragraph{(ii)} If $\eps=0$, then $Q^*(h,a) = q^*(s,b)$ hence $\Pi^*(h) \in \psi^{-1}_s\left(\pi^*(s)\right)$.
    
    {\bf(ii)} For $\psi(h)=s$ and $\breve{\Pi}(h) :\in \psi^{-1}_s(\pi^*(s))$,
    \bqan
    V^*(h) \pm \frac{2\eps}{1-\g} 
    &\overset{(i)}{\lessgtr}& v^*(s) \overset{}{\equiv} q^*(s,\pi^*(s))  \overset{(i)}{\lessgtr} Q^*\left(h, \breve{\Pi}(h)\right) \pm \frac{2\eps}{1-\g}
    \eqan
    which implies $\abs{Q^*\left(h, \breve{\Pi}(h)\right) - V^*(h)} \leq \frac{4\eps}{1-\g}$ and Lemma \ref{lem:different-optimal-action} concludes the proof.
\end{proof}

\subsection{Proof of Theorem \ref{thm:psiVstar}}

%\begin{proof}[Theorem \ref{thm:psiVstar}]
\begin{proof}
    Let us define $\pi_h(s)$ such that $(s, \pi_h(s)) := \psi(h,\Pi^*(h))$ for $\psi(h)=s$. Then,
    \beq
    q^{\pi_h}\left(s,\pi_h(s)\right) = v^{\pi_h}(s) \overset{(a)}{\lessgtr} V^*(h) \pm \frac{\eps + \eps_B}{1-\g} \label{eq:qVe1}
    \eeq
    $(a)$ follows from Theorem \ref{thm:psiVpi} applied to $\Pi = \Pi^*$ (with $\pi = \pi_h$). Now we derive a bound for any $b \in \B$.
    \bqa \label{eq:qVe2}
    q^{\pi_h}(s,b) - \frac{\g(\eps + \eps_B)}{1-\g} 
    &\overset{Thm. \ref{thm:psiVpi}}{\leq}& \langle Q^*\left(\psi^{-1}(s,b)\right) \rangle_B \equiv \sum_{\th \in \H, \ta \in \A} Q^*(\th,\ta) B(\th\ta| sb) \nonumber \\
    &\overset{}{=}& \sum_{\th \in \H} B_\H(\th|s) \sum_{\ta \in \A} Q^*(\th,\ta) B_\A(\ta | sb\th) \nonumber \\
    &\overset{(b)}{\leq}& \sum_{\th \in \H} B_\H(\th|s) \sum_{\ta \in \A} Q^*(\th,\Pi^*(\th)) B_\A(\ta | sb\th) \nonumber \\
    &\overset{}{=}& \sum_{\th: \psi(\th)=s} B_\H(\th|s) V^*(\th) \nonumber \\
    &\overset{(c)}{\leq}& V^*(h) + \eps
    \eqa
    $(b)$ is due to the definition of optimal value and $(c)$ follows form the theorem's assumptions. Together (\ref{eq:qVe1}) and (\ref{eq:qVe2}) imply,
    \beq \label{eq:qVe3}
    v^{\pi_h}(s) = q^{\pi_h}\left(s, \pi_h(s)\right) \overset{}{\leq} \max_{\tb \in \B} q^{\pi_h}(s,\tb) \overset{(\ref{eq:qVe2})}{\leq} V^*(h) + \frac{\eps + \g\eps_B}{1-\g} \overset{(\ref{eq:qVe1})}{\leq} v^{\pi_h}(s) + \frac{2(\eps + \eps_B)}{1-\g} 
    \eeq
    
    \paragraph{(ii)} For $\eps = \eps_B = 0$, the previous equation implies $v^{\pi_h}(s) = \max_{\tb \in \B} q^{\pi_h}(s, \tb)$. It shows that $\pi_h(s) = \pi^*(s)$ for all $\psi(h)=s$.
    
    \paragraph{(i)} Now for general $\eps + \eps_B > 0$ case. For all $s \in \S$ and $b \in \B$ we have,
    \beqn
    0 \overset{}{\leq} q^*(s,b) - q^{\pi_h}(s,b) \overset{}{\equiv} \sum_{\ts \in \S, \tr \in \R} p(\ts\tr|sb) \g\left(v^*(\ts) - v^{\pi_h}(\ts)\right) \overset{(d)}{\leq} \g\max_{\ts \in \S}\left(v^*(\ts) - v^{\pi_h}(\ts)\right)
    \eeqn
    \bqan
    0 \overset{}{\leq} v^*(s) - v^{\pi_h}(s) &\overset{(e)}{\leq}& \max_{\tb \in \B} q^*(s,\tb) - \max_{\tb \in \B} q^{\pi_h}(s,\tb) + \frac{2(\eps + \eps_B)}{1-\g} \\&\overset{(f)}{\leq}& \max_{\tb \in \B}\left(q^*(s,\tb) - q^{\pi_h}(s,\tb)\right)+ \frac{2(\eps + \eps_B)}{1-\g}
    \eqan
    $(d)$ expectation is replace by maximum operation; $(e)$ follows from the definition of $v^*(s)$ and (\ref{eq:qVe3}); $(f)$ is a simple mathematical fact of maximization operation. Together this implies,
    
    \bqa
    \max_{\ts \in \S} \left( v^*(\ts) - v^{\pi_h}(\ts) \right) &\leq & \g\max_{\ts \in \S} \left( v^*(\ts) - v^{\pi_h}(\ts) \right) + \frac{2(\eps + \eps_B)}{1-\g} \nonumber \\
    \Rightarrow \quad \max_{\ts \in \S} \left( v^*(\ts) - v^{\pi_h}(\ts) \right) &\leq & \frac{2(\eps + \eps_B)}{(1-\g)^2} \label{eq:qVe4} 
    \eqa
    Hence for any $\psi(h)=s$, we have,
    \bqan
    V^*(h) - \frac{\eps + \eps_B}{1-\g} &\overset{(\ref{eq:qVe1})}{\leq} & v^{\pi_h}(s) \\
    &\overset{(g)}{\leq}& v^*(s) \\
    &\overset{(\ref{eq:qVe4})}{\leq}& v^{\pi_h}(s) + \frac{2(\eps + \eps_B)}{(1-\g)^2} \\
    &\overset{(\ref{eq:qVe1})}{\leq}& V^*(h) + \frac{\eps + \eps_B}{1-\g} + \frac{2(\eps + \eps_B)}{(1-\g)^2} \leq V^*(h) + \frac{3(\eps + \eps_B)}{(1-\g)^2}
    \eqan
    $(g)$ holds by definition. Hence, the main results follows by using Lemma \ref{lem:QBqDiff} with $\Pi$ replaced by $\Pi^*$ and $\pi$ by $\pi^*$.
\end{proof}

%%%%%%%%%%%%%%%%%%%%%%%%%%%%%%%%%%%%%%%%%%%%%%%%%%%%%%%%%%%%%%%
\section{Lemmas}\label{sec:lemmas}
%%%%%%%%%%%%%%%%%%%%%%%%%%%%%%%%%%%%%%%%%%%%%%%%%%%%%%%%%%%%%%%

In this section, we establish a couple of important lemmas that bound the (optimal) value loss when we evaluate the action-value function only at the representatives.

\begin{lemma}\label{lem:Vrep}
    For any policy $\Pi$, and $\psi(h)=s$ the following holds,
    \beqn
    \left|V^\Pi(h) - \sum_{\tb \in \B} Q^\Pi(\psi^{-1}(s,\tb)) \piR(\tb|s)\right| \leq \DQ(s) + \frac{\DPi(s)}{1-\g}.
    \eeqn 
\end{lemma}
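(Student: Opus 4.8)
The plan is to interpolate between $V^\Pi(h)$ and the representative-based quantity in two stages: first I would replace the true action-values $Q^\Pi(h,\ta)$ by their representatives $Q^\Pi(\psi^{-1}(s,\tb))$, and then replace the history-dependent induced policy $\Pi_\psi(\cdot|h)$ by its state representative $\piR(\cdot|s)$. A final triangle inequality assembles the two resulting error terms into the claimed bound $\DQ(s)+\DPi(s)/(1-\g)$.

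First I would expand $V^\Pi(h)=\sum_{\ta\in\A}Q^\Pi(h,\ta)\Pi(\ta|h)$ and regroup the sum over the original actions according to the abstract action to which each is mapped. Since $\psi(h)=s$ is fixed by Assumption~\ref{asm:state-history}, every $\ta$ sends the pair $(h,\ta)$ to some $(s,\tb)$ with the same $s$, so the sets $\psi^{-1}_s(\tb)$ partition $\A$ and
\[
V^\Pi(h)=\sum_{\tb\in\B}\;\sum_{\ta\in\psi^{-1}_s(\tb)} Q^\Pi(h,\ta)\,\Pi(\ta|h).
\]
Within each block I would replace $Q^\Pi(h,\ta)$ by the representative $Q^\Pi(\psi^{-1}(s,\tb))$; the triangle inequality together with the definition \eqref{eq:delta} of $\DQ(s)$ bounds the incurred error by $\DQ(s)\sum_{\ta}\Pi(\ta|h)=\DQ(s)$, since the total probability mass is one. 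Crucially, the inner mass $\sum_{\ta\in\psi^{-1}_s(\tb)}\Pi(\ta|h)$ is exactly $\Pi_\psi(\tb|h)$ by the definition \eqref{eq:dpih} of the induced abstract policy, leaving the intermediate expression $\sum_{\tb}Q^\Pi(\psi^{-1}(s,\tb))\,\Pi_\psi(\tb|h)$.

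Second, I would swap $\Pi_\psi(\cdot|h)$ for $\piR(\cdot|s)$. Because $\psi(h)=s$ implies $h\in\psi^{-1}(s)$, the distance $\norm{\Pi_\psi(\cdot|h)-\piR(\cdot|s)}_1$ is at most $\DPi(s)$ by \eqref{eq:Delta}. Combined with the uniform bound $0\le Q^\Pi\le 1/(1-\g)$ coming from Assumption~\ref{asm:positive-rewards}, the difference $\bigl|\sum_{\tb}Q^\Pi(\psi^{-1}(s,\tb))\bigl(\Pi_\psi(\tb|h)-\piR(\tb|s)\bigr)\bigr|$ is controlled by $\tfrac{1}{1-\g}\norm{\Pi_\psi(\cdot|h)-\piR(\cdot|s)}_1\le \DPi(s)/(1-\g)$.

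The only mild obstacle is the bookkeeping in the regrouping step — in particular recognizing that the inner action sum reproduces the induced policy mass $\Pi_\psi(\tb|h)$ — and the observation that a policy mismatch weighted by action-values is controlled by the product of the uniform $Q$-bound and the $1$-norm policy distance. There is no deeper difficulty here: the argument is a pure one-step estimate and, unlike the subsequent theorems, requires no recursion or fixed-point contraction.
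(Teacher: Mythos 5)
Your proof is correct and follows essentially the same route as the paper's: the same regrouping of the action sum by abstract action via $\psi^{-1}_s(\tb)$, the same intermediate quantity $\sum_{\tb}Q^\Pi(\psi^{-1}(s,\tb))\,\Pi_\psi(\tb|h)$ obtained from the definition \eqref{eq:dpih}, and the same two error terms --- $\DQ(s)$ from \eqref{eq:delta} with unit total policy mass, and $\DPi(s)/(1-\g)$ from \eqref{eq:Delta} combined with the bound $Q^\Pi\le 1/(1-\g)$. The paper merely phrases the two interpolation steps as add-and-subtract manipulations rather than an explicit triangle inequality, which is a cosmetic difference.
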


%\begin{proof}[Lemma \ref{lem:Vrep}]

\begin{proof}
    For any $\psi(h)=s$, we start from the value function of the original process,
    \bqan
    V^\Pi(h) &\overset{}{\equiv}& \sum_{\ta \in \A} Q^\Pi(h, \ta) \Pi(\ta | h) \\
    &\overset{(a)}{=}& \sum_{\tb \in \B} \sum_{\ta \in \psi^{-1}_s(\tb)} Q^\Pi(h, \ta) \Pi(\ta | h) \\
    &\overset{}{=}& \sum_{\tb \in \B} \sum_{\ta \in \psi^{-1}_s(\tb)} \left( Q^\Pi(h,\ta) + Q^\Pi(\psi^{-1}(s, \tb)) - Q^\Pi(\psi^{-1}(s, \tb))  \right)  \Pi(\ta | h) \\
    &\overset{}{=}& \sum_{\tb \in \B} Q^\Pi(\psi^{-1}(s, \tb)) \sum_{\ta \in \psi^{-1}_s(\tb)} \Pi(\ta | h) + \sum_{\tb \in \B} \sum_{\ta \in \psi^{-1}_s(\tb)} \left( Q^\Pi(h,\ta) - Q^\Pi(\psi^{-1}(s, \tb)) \right)  \Pi(\ta | h) \\
    &\overset{(b)}{\leq}& \sum_{\tb \in \B} Q^\Pi(\psi^{-1}(s, \tb)) \Pi_\psi(\tb | h) + \DQ(s) \\
    &\overset{}{=}& \sum_{\tb \in \B} Q^\Pi(\psi^{-1}(s, \tb)) \left(\Pi_\psi(\tb | h) + \piR(\tb |s) - \piR(\tb|s) \right) + \DQ(s) \\
    &\overset{}{=}& \sum_{\tb \in \B} Q^\Pi(\psi^{-1}(s, \tb)) \piR(\tb | s) + \sum_{\tb \in \B} Q^\Pi(\psi^{-1}(s, \tb)) \left(\Pi_\psi(\tb | h) - \piR(\tb |s) \right) + \DQ(s) \\
    &\overset{(c)}{\leq}& \sum_{\tb \in \B} Q^\Pi(\psi^{-1}(s, \tb))\piR(\tb|s) + \frac{\DPi(s)}{1-\g} + \DQ(s) \\
    \eqan
    $(a)$ follows from the fact that the mapping is defined to be surjective; $(b)$ from \eqref{eq:dpih} and \eqref{eq:delta}, $(c)$ uses \eqref{eq:Delta} and the fact that action-value function is bounded, so is the representative. It is easy to get the other side of the inequality from similar steps.
\end{proof}

Now, we bound the error of evaluating the action-value function only at the representatives of the mapping when the agent is following the optimal policy in the original process.

\begin{lemma}\label{lem:VStarrep}
    For the optimal policy $\Pi^*$, and for all $\psi(h)=s$ the following holds,
    \beqn
    \left|V^*(h) - \max_{\tb \in \B} Q^*(\psi^{-1}(s, \tb))\right| \leq \DQ(s).
    \eeqn 
\end{lemma}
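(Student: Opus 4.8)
The plan is to mirror the proof of Lemma~\ref{lem:Vrep}, but to replace the policy average by a maximum; this is precisely what removes the policy-representation term $\DPi$ and leaves only $\DQ(s)$ on the right-hand side. The starting point is the Bellman-optimality identity $V^*(h) = \max_{\ta \in \A} Q^*(h,\ta)$, which holds because the optimal policy $\Pi^*$ is greedy with respect to $Q^*$ and hence places all its mass on the $Q^*$-maximizing actions.

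First I would exploit Assumption~\ref{asm:state-history} together with the surjectivity of $\psi$: for a fixed $h$ with $\psi(h)=s$, every action $\ta \in \A$ satisfies $\psi(h,\ta)=(s,\tb)$ for exactly one $\tb \in \B$, so the family $\{\psi^{-1}_s(\tb)\}_{\tb \in \B}$ partitions $\A$. This lets me split the maximum as $\max_{\ta \in \A} Q^*(h,\ta) = \max_{\tb \in \B}\max_{\ta \in \psi^{-1}_s(\tb)} Q^*(h,\ta)$.

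The heart of the argument is then the $\DQ$-bound. For every $\ta \in \psi^{-1}_s(\tb)$ we have $\psi(h,\ta)=(s,\tb)$, so by the definition~\eqref{eq:delta} of $\DQ$ the value $Q^*(h,\ta)$ lies in the band $[\,Q^*(\psi^{-1}(s,\tb)) - \DQ(s),\; Q^*(\psi^{-1}(s,\tb)) + \DQ(s)\,]$. Taking the inner maximum over $\ta \in \psi^{-1}_s(\tb)$ keeps the result inside the same two-sided band, and taking the outer maximum over $\tb$ preserves both bounds because $\DQ(s)$ is a constant shift independent of $\ta$ and $\tb$. This yields $\max_{\tb} Q^*(\psi^{-1}(s,\tb)) - \DQ(s) \leq V^*(h) \leq \max_{\tb} Q^*(\psi^{-1}(s,\tb)) + \DQ(s)$, which is exactly the claimed inequality; the symmetric lower bound comes from running the same chain in the opposite direction.

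I expect no serious obstacle. Unlike Lemma~\ref{lem:Vrep}, there is no policy average to track and therefore no $\DPi$ appears. The only point needing mild care is verifying that both maximization steps commute with the uniform $\pm\DQ(s)$ shift, which follows from $\DQ(s)$ being constant in the optimized variables, and confirming that the partition of $\A$ into the sets $\psi^{-1}_s(\tb)$ is exhaustive, which is guaranteed by the surjectivity of $\psi$ and the singleton property of $\psi(h)$ under Assumption~\ref{asm:state-history}.
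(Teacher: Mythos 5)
Your proposal is correct and follows essentially the same route as the paper's proof: both start from $V^*(h)=\max_{\ta\in\A}Q^*(h,\ta)$, split the maximum via surjectivity into $\max_{\tb\in\B}\max_{\ta\in\psi^{-1}_s(\tb)}Q^*(h,\ta)$, and then apply the definition \eqref{eq:delta} of $\DQ(s)$ to replace the inner quantity by the representative up to $\pm\DQ(s)$. The paper's version is just more terse, leaving the commutation of the maxima with the constant $\pm\DQ(s)$ shift implicit, which you spell out explicitly.
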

\begin{proof}
    %\begin{proof}[Lemma \ref{lem:VStarrep}]
    For any $\psi(h)=s$ we have,
    \bqan
    V^*(h) &\overset{}{\equiv}& \max_{\ta \in \A} Q^*(h, \ta) \overset{(a)}{=} \max_{\tb \in \B} \max_{\ta \in \psi^{-1}_s(\tb)} Q^*(h, \ta)\\
    &\overset{(\ref{eq:delta})}{\lessgtr}& \max_{\tb \in \B} Q^*(\psi^{-1}(s,\tb)) \pm \DQ(s)
    \eqan
    $(a)$ follows from the fact that mapping is defined to be surjective.
\end{proof}

%In the following sections we use above lemmas to prove bounds on the value loss due to aggregation.

%Before we prove the analogous theorem for the optimal policy, we need the following lemma. This lemma establishes the $\psi$-uniformity of the optimal action-value function.

\begin{lemma}\label{lem:dzeromdp}
    Let $P_\psi$ be an MDP then $Q^*(h,a) = Q^*(\ph,\pa)$ for all $\psi(h,a) = \psi(\ph,\pa)$.
\end{lemma}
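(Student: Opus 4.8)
The plan is to run a contraction argument on the oscillation of the optimal action-value function across each fibre of $\psi$, mirroring the structure used for Theorem~\ref{thm:exactmdp} but with the policy average replaced by a maximum. Concretely, I would set $\d := \sup\abs{Q^*(h,a)-Q^*(\ph,\pa)}$, where the supremum runs over all pairs with $\psi(h,a)=\psi(\ph,\pa)$, and aim to establish $\d \le \g\d$, which forces $\d=0$ since $\g<1$.

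First I would show the companion bound on values, namely $\abs{V^*(h)-V^*(\ph)} \le \d$ whenever $\psi(h)=\psi(\ph)=s$. Writing $V^*(h)=\max_{\ta\in\A}Q^*(h,\ta)=\max_{\tb\in\B}\max_{\ta\in\psi^{-1}_s(\tb)}Q^*(h,\ta)$ via surjectivity of $\psi$ (as in Lemma~\ref{lem:VStarrep}), each inner maximum at $h$ and at $\ph$ ranges over actions mapping to the same abstract action $\tb$, so every value in one set is within $\d$ of every value in the other; hence the inner maxima differ by at most $\d$, and taking the outer maximum over $\tb$ preserves this bound.

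Next I would expand both $Q^*(h,a)$ and $Q^*(\ph,\pa)$ by the Bellman optimality equation and regroup the sum over $(\to,\tr)$ into groups indexed by the next abstract state $\ts=\psi(ha\to\tr)$, which is well defined by Assumption~\ref{asm:state-history}. The reward contribution of each group collapses to $\tr\,P_\psi(\ts\tr|ha)$ by the definition \eqref{eq:marginP}, and the MDP condition \eqref{eq:MDP} gives $P_\psi(\ts\tr|ha)=p_{\mathrm{MDP}}(\ts\tr|sb)=P_\psi(\ts\tr|\ph\pa)$. Hence the reward parts of $Q^*(h,a)$ and $Q^*(\ph,\pa)$ cancel \emph{exactly}, and the difference reduces to a single discounted value term. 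Within each group the histories $ha\to\tr$ all share the abstract state $\ts$, so their optimal values lie within $\d$ of one another by the previous step; since the common weight $p_{\mathrm{MDP}}(\ts\tr|sb)$ factors out of both sides, the two weighted averages differ by at most $\d$. Summing over $\ts,\tr$ yields $\abs{Q^*(h,a)-Q^*(\ph,\pa)} \le \g\d$, hence $\d\le\g\d$ and $\d=0$.

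The crux, and the step I expect to be the main obstacle, is the exact cancellation of the reward and transition mass: it is this cancellation, guaranteed by \eqref{eq:MDP}, that produces the contraction factor $\g$ rather than $2\g$. A naive bound that replaces each group's value by a single representative would pick up $\g\d$ on each of the two terms separately, giving only $\d\le 2\g\d$, which is vacuous for $\g\ge\tfrac12$. The care needed is therefore to compare the two value sums \emph{group-by-group} under identical weights rather than against a representative, together with the minor bookkeeping (as in Lemma~\ref{lem:VStarrep}) that every abstract action $\tb$ has a non-empty pre-image at both $h$ and $\ph$, so the maxima in the value step are taken over corresponding sets.
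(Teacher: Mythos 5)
Your proposal is correct and follows essentially the same route as the paper's proof: define the fibre-wise oscillation $\d$ of $Q^*$, establish the companion bound $\abs{V^*(h)-V^*(\ph)}\le\d$ on state fibres, expand both action-values via the Bellman optimality equation, and use the MDP condition \eqref{eq:MDP} to cancel rewards and transition mass exactly, yielding $\d\le\g\d$ and hence $\d=0$. If anything, your value step --- comparing the inner maxima over corresponding fibres $\psi^{-1}_s(\tb)$ rather than, as the paper does, comparing $Q^*(h,\ta)$ and $Q^*(\ph,\ta)$ at the same original action $\ta$ --- is the more careful rendering, since it invokes only the matched-pair supremum that defines $\d$ (the paper's same-action comparison implicitly assumes the action map does not vary across histories in the same fibre).
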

\begin{proof}
    %\begin{proof}[Lemma \ref{lem:dzeromdp}]
    Let $\d := \underset{h,\ph, : \psi(h)=\psi(\ph), a, \pa}{\sup} |Q^*(h,a) - Q^*(\ph,\pa)|$, then for all $\psi(h) = \psi(\ph)$,
    \bqa\label{eq:vdifstarmdp}
    \left|V^*(h) - V^*(\ph)\right| &\overset{}{\equiv}& \left|\max_{\ta \in \A} Q^*(h,\ta) - \max_{\ta \in \A} Q^*(\ph,\ta)\right| \nonumber \\
    &\overset{(a)}{\leq}& \max_{\ta \in \A} \abs{Q^*(h,\ta) - Q^*(\ph,\ta)} \leq \d
    \eqa
    $(a)$ follows from simple mathematical fact of maximum value. Now for all $\psi(h,a) = \psi(\ph,\pa) = (s,b)$,
    \bqa\label{eq:qdifstarmdp}
    \left|Q^*(h,a) - Q^*(\ph,\pa)\right| 
    &\overset{(b)}{=}& \abs{\sum_{\ts \in \S, \tr \in \R} P_\psi(\ts\tr | ha) (\tr + \g V^*(\th)) \right.\nonumber\\&&\left.- \sum_{\ts \in \S, \tr \in \R} P_\psi(\ts\tr | \ph\pa) (\tr + \g V^*(\tilde{\ph})) } \nonumber \\
    &\overset{(\ref{eq:MDP})}{=}& \g \left|\sum_{\ts \in \S, \tr \in \R} p(\ts\tr | sb) \left( V^*(\th) - V^*(\tilde{\ph})\right) \right| \nonumber \\ &\overset{(\ref{eq:vdifstarmdp})}{\leq}& \g\d
    \eqa
    $(b)$ follows from the definition and $\th = ha\to\tr$ and $\tilde{\ph} = \ph\pa\to\tr$, and $\psi(\th) = \psi(\tilde{\ph}) = \ts$. By (\ref{eq:qdifstarmdp}) we have $\d \leq \g \d \Rightarrow \d = 0$. 
\end{proof}
%By using the above lemma the following theorem provides exact value guarantee for the optimal aggregated agent.

The following lemma is a stepping stone for the main results. It establishes a link between the value functions loss and the action-value functions loss.

\begin{lemma}{$(\langle Q \rangle_B - q)$}\label{lem:QBqDiff}
    Let $|V^\Pi(h) - v^\pi(s)| \leq \eps$ for all $\psi(h) = s$. Then for all $s \in \S$ and $b \in \B$ it holds:
    \beqn
    \left| \langle Q^\Pi(\psi^{-1}(s,b)) \rangle_B - q^\pi(s,b) \right| \leq \g\eps.
    \eeqn
\end{lemma}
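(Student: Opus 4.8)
The plan is to prove Lemma \ref{lem:QBqDiff} by directly unfolding the definitions of the $B$-average $\langle Q^\Pi(\psi^{-1}(s,b))\rangle_B$ and of the surrogate action-value $q^\pi(s,b)$, and then exploiting the given value-function bound $|V^\Pi(h)-v^\pi(s)|\leq\eps$ termwise. First I would write out
\beqn
\langle Q^\Pi(\psi^{-1}(s,b))\rangle_B = \sum_{\th\in\H,\ta\in\A} Q^\Pi(\th,\ta)B(\th\ta|sb),
\eeqn
and substitute the Bellman expansion $Q^\Pi(\th,\ta)=\sum_{\to,\tr}P(\to\tr|\th\ta)(\tr+\g V^\Pi(\th\ta\to\tr))$. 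Crucially, $B(\th\ta|sb)=0$ whenever $\psi(\th,\ta)\neq(s,b)$, so the sum is supported on the pre-image set, and for those terms I can fold the inner $P$-sum together with the defining sum of $P_\psi$ in \eqref{eq:marginP}. The immediate rewards $\tr$ are carried through unchanged, so subtracting $q^\pi(s,b)=\sum_{\ts,\tr}p_B(\ts\tr|sb)(\tr+\g v^\pi(\ts))$ from the expression should make the reward terms cancel, leaving only a discounted difference of values $\g\bigl(V^\Pi(\cdot)-v^\pi(\cdot)\bigr)$ weighted by $B$ (equivalently by $p_B$).

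The key step is to verify that the two reward/transition structures line up exactly so that only the value-difference survives. Since $p_B(\ps\tr|sb)=\sum_{\th,\ta}P_\psi(\ps\tr|\th\ta)B(\th\ta|sb)$ by \eqref{eq:beyondMDP}, and $P_\psi(\ps\tr|\th\ta)=\sum_{\to:\psi(\th\ta\to\tr)=\ps}P(\to\tr|\th\ta)$ by \eqref{eq:marginP}, the next abstract state $\ps$ appearing in $q^\pi$ is precisely $\psi(\th\ta\to\tr)=\psi(\th')$ where $\th'=\th\ta\to\tr$ is the extended history reached in the $Q^\Pi$ expansion. Because Assumption \ref{asm:state-history} makes $\psi(\th')$ a singleton determined by history, every extended history $\th'$ that contributes to the $B$-average maps to exactly the abstract state $\ps$ weighting $v^\pi(\ps)$ in $q^\pi$. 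Thus after the reward cancellation the difference collapses to
\beqn
\langle Q^\Pi(\psi^{-1}(s,b))\rangle_B - q^\pi(s,b) = \g\!\!\sum_{\th,\ta,\to,\tr}\!\! P(\to\tr|\th\ta)B(\th\ta|sb)\bigl(V^\Pi(\th\ta\to\tr)-v^\pi(\psi(\th\ta\to\tr))\bigr).
\eeqn

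Finally I would bound this using the hypothesis: for every extended history $\th'$ with $\psi(\th')=\ps$ we have $|V^\Pi(\th')-v^\pi(\ps)|\leq\eps$, so the bracketed term is at most $\eps$ in absolute value. Since $\sum_{\th,\ta,\to,\tr}P(\to\tr|\th\ta)B(\th\ta|sb)=1$ is a probability distribution, the weighted average of quantities bounded by $\eps$ is itself bounded by $\eps$, yielding $|\langle Q^\Pi(\psi^{-1}(s,b))\rangle_B-q^\pi(s,b)|\leq\g\eps$. The main obstacle I anticipate is purely bookkeeping: correctly threading the three nested summations (over $(\th,\ta)$ from $B$, over $(\to,\tr)$ from $P$, and over $(\ps,\tr)$ from $p_B$) and confirming that the reward terms cancel exactly rather than contributing residual error. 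The singleton property from Assumption \ref{asm:state-history} is what guarantees this clean cancellation, so I would state its use explicitly at the step where $\psi(\th\ta\to\tr)$ is identified with the summation variable $\ps$.
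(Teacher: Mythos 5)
Your proposal is correct and takes essentially the same route as the paper's proof: expand the $B$-average via the Bellman equation, regroup the successor sum by the abstract state of the extended history (justified by the fact that each history maps to a unique state), substitute the assumed value bound, and recognize the result as $q^\pi(s,b) \pm \g\eps$ using the definitions of $P_\psi$ and $p_B$. The only cosmetic difference is that you subtract $q^\pi(s,b)$ first and verify the reward terms cancel, whereas the paper replaces $V^\Pi$ by $v^\pi \pm \eps$ inline within a chain of two-sided inequalities; the underlying computation is identical.
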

\begin{proof}
    %\begin{proof}[Lemma \ref{lem:QBqDiff}]
    We begin with the action value of the representative of any $(s,b)$ as
    \bqan
    \langle Q^\Pi(\psi^{-1}(s,b)) \rangle_B \
    &\overset{}{\equiv}& \sum_{\th \in \H, \ta \in \A} B(\th\ta|sb) \sum_{\to \in \O, \tr \in \R} P(\to \tr |\th\ta)\left(\tr + \g V^\Pi(\th\ta\to\tr)\right) \\
    &\overset{(a)}{=}& \sum_{\th \in \H, \ta \in \A} B(\th\ta|sb) \sum_{\ts \in \S} \sum_{\tr \in \R, \to: \psi(\th\ta\to\tr)=\ts} P(\to \tr |\th\ta)\left(\tr + \g V^\Pi(\th\ta\to\tr)\right) \\
    &\overset{(b)}{\lessgtr}& \sum_{\th \in \H, \ta \in \A} B(\th\ta|sb) \sum_{\ts \in \S, \tr \in \R} P_\psi(\ts \tr |\th\ta)\left(\tr + \g v^\pi(\ts) \pm \g\eps \right) \\
    &\overset{}{\equiv}& \sum_{\ts \in \S, \tr \in \R} p_B(\ts \tr |sb)\left(\tr + \g v^\pi(\ts) \right)  \pm \g\eps \equiv q^\pi(s,b) \pm \g\eps
    \eqan
    $(a)$ follows since $\psi$ is surjective; $(b)$ follows from the assumption.
\end{proof}

%Before we provide tight bounds for the optimal policies we need a couple of lemmas. If we have an approximately $\psi$-uniform action-value function then the state dependent representation error has an upper bound.

\begin{lemma}{$(\eps_{Q\Pi})$}\label{lem:eQPi}
    Let $|Q^\Pi(h,a)-Q^\Pi(\ph,\pa)|\leq \eps$ for all $\psi(h,a)=\psi(\ph,\pa)$. Then $\DQ(s) \leq \eps$ for all $s \in \S$.
\end{lemma}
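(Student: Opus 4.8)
The plan is to prove this directly by unfolding the definition of $\DQ(s)$ and invoking the hypothesis on a single fibre of $\psi$. Recall that by \eqref{eq:delta} we have $\DQ(s) = \sup_{\th,\ta,\tb:\,\psi(\th,\ta)=(s,\tb)} \abs{Q^\Pi(\psi^{-1}(s,\tb)) - Q^\Pi(\th,\ta)}$, and that by the convention \eqref{eq:Qrep} the quantity $Q^\Pi(\psi^{-1}(s,\tb))$ is nothing more than $Q^\Pi$ evaluated at some fixed representative pair, say $(\th_0,\ta_0)$, chosen to satisfy $\psi(\th_0,\ta_0)=(s,\tb)$. So the entire statement reduces to controlling the gap between an arbitrary member of a fibre and its chosen representative, both of which live in the same fibre.

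First I would fix an arbitrary triple $(\th,\ta,\tb)$ with $\psi(\th,\ta)=(s,\tb)$, and let $(\th_0,\ta_0)$ denote the representative implicit in $Q^\Pi(\psi^{-1}(s,\tb))$, so that $\psi(\th_0,\ta_0)=(s,\tb)$ as well. The key observation is then simply that $\psi(\th,\ta) = (s,\tb) = \psi(\th_0,\ta_0)$, i.e. the two pairs collapse to the \emph{same} abstract state-action pair, so the hypothesis of the lemma applies verbatim and yields $\abs{Q^\Pi(\th,\ta) - Q^\Pi(\th_0,\ta_0)} \leq \eps$. Since $Q^\Pi(\th_0,\ta_0) = Q^\Pi(\psi^{-1}(s,\tb))$ by definition, this is exactly the bound $\abs{Q^\Pi(\psi^{-1}(s,\tb)) - Q^\Pi(\th,\ta)} \leq \eps$.

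Finally, because the triple $(\th,\ta,\tb)$ was arbitrary among those with $\psi(\th,\ta)=(s,\tb)$, taking the supremum over all such triples preserves the uniform bound, giving $\DQ(s) \leq \eps$; as $s$ was arbitrary this holds for all $s \in \S$.

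Honestly there is no serious obstacle here: the result is essentially definitional, a matter of recognizing that the representative lies in the same fibre as the point it is compared against. The only point requiring a moment of care is to confirm that the representative $(\th_0,\ta_0)$ is a legitimate member of the fibre $\{(\th,\ta):\psi(\th,\ta)=(s,\tb)\}$ so that the hypothesis is actually applicable to the pair (representative, arbitrary member); this is immediate from the defining property \eqref{eq:Qrep} of the representative. This lemma is precisely the ingredient needed to pass from the pointwise $Q$-uniformity hypothesis of Theorem \ref{thm:psiQstar} to the representation-error quantity $\DQ$ appearing in the MDP-style bounds.
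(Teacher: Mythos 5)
Your proof is correct and follows essentially the same route as the paper's: unfold the definition of $\DQ(s)$ from \eqref{eq:delta}, note that the representative $Q^\Pi(\psi^{-1}(s,\tb))$ is itself a member of the pre-image of $(s,\tb)$, apply the uniformity hypothesis to the pair (representative, arbitrary member), and take the supremum. The paper compresses this into a single line, so your version is just a more explicit spelling-out of the identical argument.
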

\begin{proof}
    %\begin{proof}[Lemma \ref{lem:eQPi}]
    For all $s \in \S$,
    \bqan
    \DQ(s) 
    &\overset{}{\equiv}& \sup_{\th, \ta, \tb : \psi(\th,\ta) = (s,\tb)} | Q^\Pi(\psi^{-1}(s,\tb)) - Q^\Pi(\th, \ta)| \overset{(a)}{\leq} \eps 
    \eqan
    $(a)$ follows from the assumption and the fact that $Q^\Pi(\psi^{-1}(s,b))$ can be any member in the pre-image set of $(s,b)$.
\end{proof}

Moreover, we need Lemma 8 from \cite{Hutter2016}. For completeness, we state the lemma in this work without repeating the proof. It gives value loss-bounds between the optimal policy and a policy that has bounded one-step action-value loss.

\begin{lemma}{$($\cite{Hutter2016} Lemma 8. $\Pi(h) \neq \Pi^*(h))$} \label{lem:different-optimal-action}
    If $Q^*(h,\Pi(h)) \geq V^*(h) - \eps$ for all $h$ and some policy $\Pi$, then for all $h$ and $a$ it follows:
    \bqan
    0 \leq Q^*(h,a) - Q^\Pi(h,a) \leq \frac{\g\eps}{1-\g} \quad \text{and} \quad
    0 \leq V^*(h) - V^\Pi(h) \leq \frac{\eps}{1-\g}.
    \eqan
\end{lemma}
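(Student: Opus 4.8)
The plan is to reduce both claims to a single self-referential inequality for the worst-case value gap and then solve it by exploiting $\g<1$ together with the boundedness of values. First I would dispose of the lower bounds $0 \le Q^*(h,a) - Q^\Pi(h,a)$ and $0 \le V^*(h) - V^\Pi(h)$: these are immediate from the optimality of $\Pi^*$. Since $V^*(\th) \ge V^\Pi(\th)$ for every history $\th$ (the optimal value dominates any policy's value), comparing the Bellman expansions $Q^*(h,a) = \sum_{\to,\tr} P(\to\tr|ha)(\tr + \g V^*(ha\to\tr))$ and $Q^\Pi(h,a) = \sum_{\to,\tr} P(\to\tr|ha)(\tr + \g V^\Pi(ha\to\tr))$ term by term gives $Q^*(h,a) \ge Q^\Pi(h,a)$, and averaging (or maximizing) over the action then gives $V^*(h) \ge V^\Pi(h)$.

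For the upper bounds, I would set $\d := \sup_{\th}\big(V^*(\th) - V^\Pi(\th)\big)$, which is finite and nonnegative because all values lie in $[0, 1/(1-\g)]$ by Assumption \ref{asm:positive-rewards}. The key step is to bound $V^*(h) - V^\Pi(h)$ in terms of $\d$ itself, splitting the gap through the action taken by $\Pi$:
\beqn
V^*(h) - V^\Pi(h) = \big(V^*(h) - Q^*(h,\Pi(h))\big) + \big(Q^*(h,\Pi(h)) - Q^\Pi(h,\Pi(h))\big),
\eeqn
where I use $V^\Pi(h) = Q^\Pi(h,\Pi(h))$ (for a stochastic $\Pi$ the same identity holds after interpreting $Q^*(h,\Pi(h))$ and $Q^\Pi(h,\Pi(h))$ as the $\Pi(\cdot|h)$-averages of the respective $Q$-values). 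The first bracket is at most $\eps$ directly by the hypothesis $Q^*(h,\Pi(h)) \ge V^*(h) - \eps$. In the second bracket the reward terms cancel in the Bellman expansions, leaving $\g\sum_{\to,\tr} P(\to\tr|h\Pi(h))\big(V^*(h\Pi(h)\to\tr) - V^\Pi(h\Pi(h)\to\tr)\big) \le \g\d$. Hence $V^*(h) - V^\Pi(h) \le \eps + \g\d$ for every $h$; taking the supremum over $h$ yields $\d \le \eps + \g\d$, i.e. $\d \le \eps/(1-\g)$, which is the claimed value bound.

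Finally, the action-value bound follows from one more Bellman expansion: for any $h,a$ the rewards cancel and $Q^*(h,a) - Q^\Pi(h,a) = \g\sum_{\to,\tr}P(\to\tr|ha)\big(V^*(ha\to\tr)-V^\Pi(ha\to\tr)\big) \le \g\d \le \g\eps/(1-\g)$. The only delicate point I expect is justifying the passage $\d \le \eps + \g\d \Rightarrow \d \le \eps/(1-\g)$: this is legitimate precisely because $\d$ is \emph{finite} (guaranteed by the bounded-reward Assumption \ref{asm:positive-rewards}) and $\g < 1$, so that taking the supremum is a genuine contraction argument rather than a vacuous rearrangement. Everything else is bookkeeping with the Bellman expansions, where the reward terms cancel and only the discounted value differences survive.
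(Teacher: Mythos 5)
Your proof is correct, but there is nothing in the paper to compare it against: the paper deliberately does \emph{not} prove this lemma, stating it verbatim as Lemma~8 of \cite{Hutter2016} ``without repeating the proof.'' Judged on its own, your argument is sound and is in fact the same style of self-referential supremum argument that the paper uses throughout its own proofs (e.g.\ Theorems~\ref{thm:psimdppi} and \ref{thm:psiQpi}): define $\d := \sup_h\bigl(V^*(h)-V^\Pi(h)\bigr)$, split the gap at $h$ through the action $\Pi(h)$ into a term controlled by the hypothesis ($\leq\eps$) and a Bellman-recursion term ($\leq\g\d$), conclude $\d\leq\eps+\g\d$, and solve using finiteness of $\d$ and $\g<1$; the $Q$-bound then follows from one further Bellman expansion in which the rewards cancel. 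You are also right that finiteness of $\d$ (from Assumption~\ref{asm:positive-rewards}, values lie in $[0,1/(1-\g)]$) is the one point that must be checked for the rearrangement to be legitimate --- the paper's own proofs silently rely on the same fact. The only cosmetic blemish is in your treatment of the lower bounds: $V^*(h)\geq V^\Pi(h)$ holds pointwise by the very definition of $\Pi^*\in\arg\max_{\tilde\Pi}V^{\tilde\Pi}$, so after deducing $Q^*(h,a)\geq Q^\Pi(h,a)$ from it via the Bellman expansion, the closing remark that ``averaging over the action then gives $V^*\geq V^\Pi$'' is circular as a derivation of that fact --- harmlessly so, since the fact was already available, but the sentence should be dropped or rephrased.
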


%%%%%%%%%%%%%%%%%%%%%%%%%%%%%%%%%%%%%%%%%%%%%%%%%%%%%%%%%%%%%%%
\section{List of Notation}\label{sec:notation}
%%%%%%%%%%%%%%%%%%%%%%%%%%%%%%%%%%%%%%%%%%%%%%%%%%%%%%%%%%%%%%%

\begin{tabbing}
    \hspace{0.15\textwidth} \= \hspace{0.73\textwidth} \= \kill
    {\bf Symbol }      \> {\bf Explanation}                                                    \\[0.5ex]
    
    $\SetR$            \> set of real numbers                                                  \\[0.5ex]
    %$\sum$        \> combined summation and integration                                \\[0.5ex]
    $\times$        \> Cartesian product				                                    \\[0.5ex]
    
    $\Delta(X)$        \> probability distribution over a set $X$                                   \\[0.5ex]
    $X^T$        \> transpose of matrix $X$                                   \\[0.5ex]
    $\tilde{x}$        \> local variable   				                                    \\[0.5ex]
    $x^\prime$         \> different member of a set		                                    \\[0.5ex]
    %$dx$        \> continuous measure				                                    \\[0.5ex]
    
    $\P$               \> original process			                                            \\[0.5ex]  
    $\A, \O, \R$       \> continuous action, observation and reward spaces of the original process \\[0.5ex]
    $a, o, r$        \> action, observation and reward tuple $\in \A \times \O \times \R$				                                    \\[0.5ex]
    $\H$               \> set of all histories		                                            \\[0.5ex]  
    $h$        \> history $\in \H$				                                    \\[0.5ex]
    
    $Q, V, \Pi$        \> \mbox{(action-)value} function and policy of the original process               \\[0.5ex]  
    $\Pi^*$        \> optimal policy of the original process				                                    \\[0.5ex]
    $\breve{\Pi}$        \> elevated policy on the original process from the abstract-process			                                    \\[0.5ex]
    
    $\g$               \> discount factor				                                    \\[0.5ex]
    
    $\psi$        \> homomorphism map				                                    \\[0.5ex]
    $P_\psi$        \> marginalized abstract-process				                                    \\[0.5ex]
    $p_B$        \> surrogate MDP process				                                    \\[0.5ex]
    $p_{\text{MDP}}$        \> abstract MDP process				                                    \\[0.5ex]
    $\S, \B$           \> finite state and action spaces of the abstract-process             \\[0.5ex]
    $s,b$        \> state action pair $\in \S \times \B$				                                    \\[0.5ex]
    
    $q,v,\pi$        \> \mbox{(action-)value} function and policy of the surrogate MDP process	         \\[0.5ex]
    $\pi^*$        \> optimal policy of the surrogate MDP process				                                    \\[0.5ex]
    $\Pi_\psi$        \> history-dependent abstract policy				                                    \\[0.5ex]
    $\psi^{-1}_b(s)$        \> set of histories mapped to $(s,b)$ pair				                                    \\[0.5ex]
    $\psi_s(b)$        \> history-dependent set of (original) actions mapped to an $(s,b)$ pair			                                    \\[0.5ex]
    $Q^\Pi(\psi^{-1}(s,b))$        \> action value representative of an $(s,b)$ partition				                                    \\[0.5ex]
    $\DPi$        \> maximum variation among abstracted policy members			                                    \\[0.5ex]
    $\DQ$        \> maximum variation among abstracted action value members				                                    \\[0.5ex]
    %$\eps(s)$        \> aggregation error tolerance of state $s$				                                    \\[0.5ex]
    %$\eps_{\max}$        \> maximum aggregation error tolerance				                                    \\[0.5ex]
    $B$        \> stochastic inverse of the homomorphism map				                                    \\[0.5ex]
    %$B_\H$        \> stochastic inverse of history mapping				                                    \\[0.5ex]
    %$B_\A$        \> stochastic inverse of action mapping				                                    \\[0.5ex]
    $B^\pi$        \> $B$ and $\pi$ induced measure on the original action space				                                    \\[0.5ex]
    %$p_B$        \> $B$ dependent aggregated process				                                    \\[0.5ex]
    $\langle \cdot \rangle_B$        \> $B$ average				                                    \\[0.5ex]
    $\eps, \eps_B, \eps_{\max}$        \> small positive error constants				                                    \\[0.5ex]
\end{tabbing}

%%%%%%%%%%%%%%%%%%%%%%%%%%%%%%%%%%%%%%%%%%%%%%%%%%%%%%%%%%%%%%%
\section{Example Aggregations}\label{sec:Example-2}
%%%%%%%%%%%%%%%%%%%%%%%%%%%%%%%%%%%%%%%%%%%%%%%%%%%%%%%%%%%%%%%

In this section, we provide a toy example to illustrate the possibility of approximately joint state-action aggregation beyond MDPs. In the example, contrary to ESA, we also aggregate the approximately similar policy state-action pairs.  These pairs are not aggregated in ESA due to the exact policy similarity condition, \emph{cf.} Theorem 5 of \cite{Hutter2016}.

For simplicity, we assume that actions and histories are mapped independently and the original environment $P$ is an MDP. We define the original action and observation spaces as $\A = \O = \{1,2,3,4\}$. Moreover, $\S = \{X, Y\}$, $\B = \{\alpha, \beta\}$. The original state-action pairs are represented by dots and the shaded regions indicate the mapping function. The dynamics are (fictitious) region constant (see Figure \ref{fig:example})\footnote{Only the dots are elements of the joint space and the regions are fictitious. The aggregated pair $(s,b)$ is indicated adjacent to each region.}. 

We assume an arbitrary policy which may depend on the history rather than only on the last observation. This allows complex dynamics for the proof of concept. We express the environment as
\bqan
P(\po|oa) &=& \sum_{\ta \in \A} P(\po|oa) \Pi(\ta|oa\po) \\
&=& \sum_{\ta \in \A} P^\Pi(\po\ta|oa).
\eqan

We express the dynamics with a joint measure $P^\Pi$ and do not distinguish between the policy and the environment unless otherwise stated. Let the rewards be a function of the originating region and the problem has a finite set of real-valued rewards. We present three cases in this example: non-MDP dynamics, approximate action-values and policy disagreement.

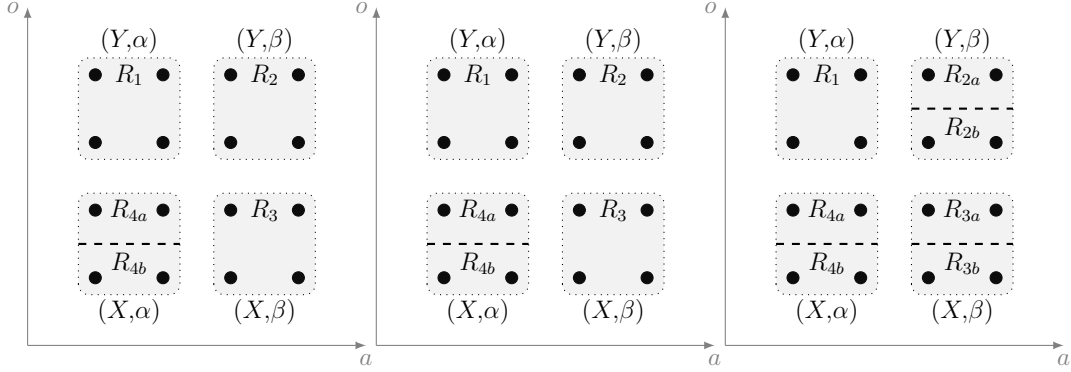
\begin{figure*}
    \begin{subfigure}[b]{0.30\textwidth}
        \centering
        %\resizebox{\linewidth}{!}
        \begin{tikzpicture}[scale=0.45, every node/.style={scale=0.8}]
        \coordinate (Origin)   at (0,0);
        \coordinate (XAxisMin) at (0,0);
        \coordinate (XAxisMax) at (10,0);
        \coordinate (YAxisMin) at (0,0);
        \coordinate (YAxisMax) at (0,10);
        \draw [thin, gray,-latex] (XAxisMin) -- (XAxisMax) node [below] {$a$};% Draw x axis
        \draw [thin, gray,-latex] (YAxisMin) -- (YAxisMax) node [left] {$o$};% Draw y axis
        % Puts the shaded rectangle
        \foreach \x in {1,2,...,4}{% Two indices running over each
            \foreach \y in {1,2,...,4}{% node on the grid we have drawn 
                \node[draw,circle,inner sep=2pt,fill] at (2*\x,2*\y) {};
                % Places a dot at those points
            }
        }
        \filldraw[fill=gray, fill opacity=0.1, draw=black, dotted, rounded corners] (1.5,1.5)
        rectangle (4.5,4.5);
        \filldraw[fill=gray, fill opacity=0.1, draw=black, dotted, rounded corners] (5.5,1.5)
        rectangle (8.5,4.5);
        \filldraw[fill=gray, fill opacity=0.1, draw=black, dotted, rounded corners] (1.5,5.5)
        rectangle (4.5,8.5);
        \filldraw[fill=gray, fill opacity=0.1, draw=black, dotted, rounded corners] (5.5,5.5)
        rectangle (8.5,8.5);
        
        \draw[dashed, thick] (1.5,3) -- (4.5,3); 
        
        \node at (3,8) {$R_1$};
        \node at (7,8) {$R_2$};
        \node at (7,4) {$R_3$};
        \node at (3,4) {$R_{4a}$};
        \node at (3,3) [below] {$R_{4b}$};

        \node at (3,1) {$(X,\alpha)$};
        \node at (7,1) {$(X,\beta)$};
        \node at (3,9) {$(Y,\alpha)$};
        \node at (7,9) {$(Y,\beta)$};
        \end{tikzpicture} 
    \end{subfigure}
    \begin{subfigure}[b]{0.30\textwidth}
        \centering
        \begin{tikzpicture}[scale=0.45, every node/.style={scale=0.8}]
        \coordinate (Origin)   at (0,0);
        \coordinate (XAxisMin) at (0,0);
        \coordinate (XAxisMax) at (10,0);
        \coordinate (YAxisMin) at (0,0);
        \coordinate (YAxisMax) at (0,10);
        \draw [thin, gray,-latex] (XAxisMin) -- (XAxisMax) node [below] {$a$};% Draw x axis
        \draw [thin, gray,-latex] (YAxisMin) -- (YAxisMax) node [left] {$o$};% Draw y axis
        % Puts the shaded rectangle
        \foreach \x in {1,2,...,4}{% Two indices running over each
            \foreach \y in {1,2,...,4}{% node on the grid we have drawn 
                \node[draw,circle,inner sep=2pt,fill] at (2*\x,2*\y) {};
                % Places a dot at those points
            }
        }
        \filldraw[fill=gray, fill opacity=0.1, draw=black, dotted, rounded corners] (1.5,1.5)
        rectangle (4.5,4.5);
        \filldraw[fill=gray, fill opacity=0.1, draw=black, dotted, rounded corners] (5.5,1.5)
        rectangle (8.5,4.5);
        \filldraw[fill=gray, fill opacity=0.1, draw=black, dotted, rounded corners] (1.5,5.5)
        rectangle (4.5,8.5);
        \filldraw[fill=gray, fill opacity=0.1, draw=black, dotted, rounded corners] (5.5,5.5)
        rectangle (8.5,8.5);
        
        \draw[dashed, thick] (1.5,3) -- (4.5,3); 
        
        \node at (3,8) {$R_1$};
        \node at (7,8) {$R_2$};
        \node at (7,4) {$R_3$};
        \node at (3,4) {$R_{4a}$};
        \node at (3,3) [below] {$R_{4b}$};

        \node at (3,1) {$(X,\alpha)$};
        \node at (7,1) {$(X,\beta)$};
        \node at (3,9) {$(Y,\alpha)$};
        \node at (7,9) {$(Y,\beta)$};
        
        \end{tikzpicture}
    \end{subfigure}
    \begin{subfigure}[b]{0.30\textwidth}
        \centering
        \begin{tikzpicture}[scale=0.45, every node/.style={scale=0.8}]
        \coordinate (Origin)   at (0,0);
        \coordinate (XAxisMin) at (0,0);
        \coordinate (XAxisMax) at (10,0);
        \coordinate (YAxisMin) at (0,0);
        \coordinate (YAxisMax) at (0,10);
        \draw [thin, gray,-latex] (XAxisMin) -- (XAxisMax) node [below] {$a$};% Draw x axis
        \draw [thin, gray,-latex] (YAxisMin) -- (YAxisMax) node [left] {$o$};% Draw y axis
        % Puts the shaded rectangle
        \foreach \x in {1,2,...,4}{% Two indices running over each
            \foreach \y in {1,2,...,4}{% node on the grid we have drawn 
                \node[draw,circle,inner sep=2pt,fill] at (2*\x,2*\y) {};
                % Places a dot at those points
            }
        }
        \filldraw[fill=gray, fill opacity=0.1, draw=black, dotted, rounded corners] (1.5,1.5)
        rectangle (4.5,4.5);
        \filldraw[fill=gray, fill opacity=0.1, draw=black, dotted, rounded corners] (5.5,1.5)
        rectangle (8.5,4.5);
        \filldraw[fill=gray, fill opacity=0.1, draw=black, dotted, rounded corners] (1.5,5.5)
        rectangle (4.5,8.5);
        \filldraw[fill=gray, fill opacity=0.1, draw=black, dotted, rounded corners] (5.5,5.5)
        rectangle (8.5,8.5);
        
        \draw[dashed, thick] (1.5,3) -- (4.5,3); 
        \draw[dashed, thick] (5.5,3) -- (8.5,3); 
        \draw[dashed, thick] (5.5,7) -- (8.5,7); 
        
        \node at (3,8) {$R_1$};
        \node at (7,8) {$R_{2a}$};
        \node at (7,7) [below] {$R_{2b}$};
        \node at (7,4) {$R_{3a}$};
        \node at (7,3) [below] {$R_{3b}$};
        \node at (3,4) {$R_{4a}$};
        \node at (3,3) [below] {$R_{4b}$};

        \node at (3,1) {$(X,\alpha)$};
        \node at (7,1) {$(X,\beta)$};
        \node at (3,9) {$(Y,\alpha)$};
        \node at (7,9) {$(Y,\beta)$};
        \end{tikzpicture}
    \end{subfigure}
    \caption{\textbf{(Left):} Non-MDP aggregation, \textbf{(Middle):} Approximate aggregation, \textbf{(Right):} Violating policy uniformity condition}
    \label{fig:example}
\end{figure*}

\paradot{Non-MDP Example} In the first case, this example (see Figure \ref{fig:example}:Left) demonstrates a non-MDP aggregation. Let the problem have the following region uniform transition probability matrix in the observation-action space:
\beq
P^\Pi =
\begin{bmatrix}
    0 & 1 & 0 & 0  & 0 \\
    0 & 0 & 1 & 0  & 0 \\
    0 & 0 & 0 & 1/2 & 1/2 \\
    1 & 0 & 0 & 0  & 0 \\
    1/2 & 0 & 0 & 1/4 & 1/4
\end{bmatrix}
\eeq
where $P^\Pi_{ij}$ is the probability of reaching region $R_j$ if the current $(o,a) \in R_i$\footnote{The indexes should be read in order --- i.e. $1,2,3,4a,4b$.}. Formally, $P^\Pi_{ij} = \sum_{(\to,\ta) \in R_j} P^\Pi(\to\ta|oa \in R_j)$. The marginalized process is expressed as
\beqn
P_\psi(X|ha) = P_\psi(X|oa) = \sum_{\to : \psi(oa\to)=X, \ta \in \A} P^\Pi(\to\ta|oa).
\eeqn

This is not even approximately an MDP. It is evident from the following probabilities of reaching state $X$ from itself.
\bqan
%P_\psi(X|oa \in R_3) =& P^\Pi_{33} + P^\Pi_{34a} + P^\Pi_{34b} &= 1 \\
P_\psi(X|oa \in R_{4a}) =& P^\Pi_{4a3} + P^\Pi_{4a4a} + P^\Pi_{4a4b} &= 0 \\
P_\psi(X|oa \in R_{4b}) =& P^\Pi_{4b3} + P^\Pi_{4b4a} + P^\Pi_{4b4b} &= \frac{1}{2}
\eqan

The above equations show that reaching state $X$ from the regions $R_{4a}$ and $R_{4b}$ are different. But, we can still aggregate the regions if they have similar action-values.
Let the regional rewards are $r = \begin{bmatrix} 0 & 0 & 0 & \g & 0 \end{bmatrix}$. The regional action values can be expressed as
\beq
Q^\Pi(o,a) = r(oa) + \g \sum_{\to \in \O,\ta \in \A} Q^\Pi(\to,\ta)P^\Pi(\to\ta|oa).
\eeq

It translates into a vectorization form with each region $i$ has the action value expressed as
\beq
Q_i = r_i + \g \sum_{j} P^\Pi_{ij}Q_j.
\eeq
By solving this system of equations we get the following region uniform action-value vector,
\beq
Q = \begin{bmatrix}
    c-2 & \g^2c & \g c & c & c
\end{bmatrix}
\eeq
where $c = \frac{2}{1-\g^3} \geq 2$. Hence, the example shows that even the regions $R_{4a}$ and $R_{4b}$ have non-MDP dynamics, they can still be aggregated due to the same action-values.

\paradot{Approximate Q-value Example} Now for the second case, we perform an approximate aggregation in region $R_3$ (see Figure \ref{fig:example}:Middle). Let the reward in $R_{3b}$ is $\eps$. The updated reward vector is $r = \begin{bmatrix} 0 & 0 & 0 & \eps & \g & 0 \end{bmatrix}$. By keeping everything same as in the first example, the new transition matrix is given as
\beq
P^\Pi =
\begin{bmatrix}
    0 & 1 & 0 & 0  & 0 & 0 \\
    0 & 0 & 1/2 & 1/2  & 0 & 0\\
    0 & 0 & 0 & 0 & 1/2 & 1/2 \\
    0 & 0 & 0 & 0 & 1/2 & 1/2 \\
    1 & 0 & 0 & 0  & 0 & 0 \\
    1/2 & 0 & 0 & 0 & 1/4 & 1/4
\end{bmatrix}
\eeq

Similar to the first case, we get the aggregated action-values as
\beq
Q = \begin{bmatrix}
    c-2 & \frac{\g\eps}{2}+ \g^2c & \g c & \g c + \eps & c & c
\end{bmatrix}
\eeq
where $c = \frac{\g^2\eps + 4}{2(1-\g^3)} \geq 2$. It shows that the regions $R_{3a}$ and $R_{3b}$ can be approximately aggregated together.

\paradot{Approximate Policy Example} As the last case, let us divide the region $R_2$ into two regions with different policies (see Figure \ref{fig:example}:Right). Let the policy is approximately similar, i.e. $\abs{\Pi(a|o \in R_{2a}) - \Pi(a|o \in R_{2b})} = \eps^\prime$ for all $\psi(o,a) = (Y,\beta)$. It makes region $R_1$  approachable from itself because $\eps^\prime$ weight of the aggregated action $\beta$ in region $R_{2b}$ is distributed to the aggregated action $\alpha$ in region $R_{1}$. This effectively translates into the following region uniform transition matrix,
\beqn
P^\Pi =
\begin{bmatrix}
    \eps^\prime & 1/2 & 1/2-\eps^\prime & 0 & 0  & 0 & 0 \\
    0 & 0 & 0 & 1/2 & 1/2  & 0 & 0\\
    0 & 0 & 0 & 1/2 & 1/2  & 0 & 0\\
    0 & 0 & 0 & 0 & 0 & 1/2 & 1/2 \\
    0 & 0 & 0 & 0 & 0 & 1/2 & 1/2 \\
    1 & 0 & 0 & 0 & 0  & 0 & 0 \\
    1/2 & 0 & 0 & 0 & 0 & 1/4 & 1/4
\end{bmatrix}
\eeqn

The reward structure is the same as in the previous case, $r = \begin{bmatrix} 0 & 0 & 0 & 0 & \eps & \g & 0 \end{bmatrix}$. Finally, we get the action-value vector as
\beq
Q = \begin{bmatrix}
    \g^2 g(\eps^\prime)(\eps/2 + \g c) \\ \frac{\g\eps}{2} + \g^2 c \\ \frac{\g\eps}{2} + \g^2 c \\ \g c \\ \g c + \eps \\ c \\ c
\end{bmatrix}^T
\eeq
where $c = \frac{4 + \g^2\eps g(\eps^\prime)}{2(1-\g^3g(\eps^\prime))}$ and $g(\eps^\prime) := \frac{1-\eps^\prime}{1-\g\eps^\prime}$. This final case shows that although the regions $R_{2a}$ and $R_{2b}$ have different policies they still have same action-values. Hence, the regions can be aggregated together exactly. It allows us to have coarser maps than ESA permits.

\ifuc

%%%%%%%%%%%%%%%%%%%%%%%%%%%%%%%%%%%%%%%%%%%%%%%%%%%%%%%%%%%%%%%%
\section{An Example Q-uniform Homomorphism}\label{sec:oldExample}
%%%%%%%%%%%%%%%%%%%%%%%%%%%%%%%%%%%%%%%%%%%%%%%%%%%%%%%%%%%%%%%%

In this section we provide an example Q-uniform homomorphism. The example is a powerful template homomorphism that can be used for a set of original environments. The abstract state space is $\S := \{X, Y\}$ and the abstract action space is $\B := \{\alpha, \beta\}$. The original action $\A$ and observation $\O$ spaces are finite. Let $p_Y(ha) := P_\psi(Y|ha)$ is the probability that the abstract state $Y$ is reached from a history $h$ after taking action $a$. We also assume that the environment has an extra structure such that $p_Y(h):= p_Y(ha) = p_Y(h\pa)$, i.e. the current action does not affect the next abstract state transition but only the history does. For any history $h$ and action $a$, let the reward function satisfies the following structure:

\beq\label{eq:example-reward}
    r(ha) := \begin{cases}
    -\g p_Y(ha) \quad &\text{if } \psi(h,a) = (X,\alpha) \\
    -\frac{1}{2}-\g p_Y(ha) \quad &\text{if } \psi(h,a) = (X,\beta) \\
    \frac{1}{2}-\g p_Y(ha) \quad &\text{if } \psi(h,a) = (Y,\alpha) \\
    1-\g p_Y(ha) \quad &\text{if } \psi(h,a) = (Y,\beta)
\end{cases}
\eeq

It is easy to see that for any history $h$ and action $a$ the optimal action-value function is given as,

\beq\label{eq:example-q}
Q^*(h,a) = r(ha) + \g \left(\sum_{\po\pr:\psi(ha\po\pr) =X}P(\po\pr|ha)V^*(ha\po\pr) + \sum_{\po\pr:\psi(ha\po\pr) =Y}P(\po\pr|ha)V^*(ha\po\pr)\right)
\eeq

From \eqref{eq:example-reward} and \eqref{eq:example-q} we get a $\psi$ uniform action value function,

\beq
Q^*(h,a) := \begin{cases}
    0 \quad &\text{if } \psi(h,a) = (X,\alpha) \\
    -\frac{1}{2} \quad &\text{if } \psi(h,a) = (X,\beta) \\
    \frac{1}{2} \quad &\text{if } \psi(h,a) = (Y,\alpha) \\
    1 \quad &\text{if } \psi(h,a) = (Y,\beta)
\end{cases}
\eeq

Let us have a fixed $B$ and $p_Y(sb) := \sum_{\th \in \H, \ta \in \A} p_Y(ha)B(ha|sb)$, $r(sb) := \sum_{\th \in \H, \ta \in \A} P(\pr|ha)B(ha|sb)\pr = \sum_{\th \in \H, \ta \in \A} r(ha)B(ha|sb)$. And, since $p_Y(ha)$ only depend on history, therefore, $p_Y(s) := p_Y(sb) = p_Y(sb')$ and the abstract reward function is defined as,

\beq\label{eq:example-reward-state}
r(sb) := \begin{cases}
    -\g p_Y(X) \quad &\text{if } (s,b) = (X,\alpha) \\
    -\frac{1}{2}-\g p_Y(X) \quad &\text{if } (s,b)= (X,\beta) \\
    \frac{1}{2}-\g p_Y(Y) \quad &\text{if } (s,b) = (Y,\alpha) \\
    1-\g p_Y(Y) \quad &\text{if } (s,b) = (Y,\beta).
\end{cases}
\eeq

The action-value function of the abstract-process takes the following form,
\beq
q^*(s,b) = r(sb) + \g p_Y(s)v^*(Y) + (1-p_Y(s))v^*(X).
\eeq

It is straight forward to see that $q^*(X, \alpha) > q^*(X, \beta)$ and $q^*(Y, \alpha) < q^*(Y, \beta)$. The history dependent conditional distribution of the abstract states, i.e. $p_Y(h)$, does not matter in this case. Therefore, the example homomorphism can handle non-MDP cases when $p_Y(h) \neq p_Y(\ph)$  for some $\psi(h)=\psi(\ph)$. For any $B$, the optimal policy in the abstract space is $\pi^*(X) = \alpha$ and $\pi^*(Y) = \beta$. From \eqref{eq:example-q}, for any $\psi(h)=s$, we get that $\Pi^*(h) \in \psi^{-1}_s(\pi^*(s))$.

\beq
V^*(h) = r(h\Pi^*(h)) + \g V^*(X)(1-P(Y|h\Pi^*(h))) + \g V^*(Y) P(Y|h\Pi^*(h))
\eeq

\beq
Q^*(h,a) = r(ha) + \g V^*(X)(1-P(Y|ha)) + \g V^*(Y) P(Y|ha)
\eeq

\beq
V^*(h) = r(h\Pi^*(h)) + \g P(Y|h\Pi^*(h))
\eeq

\beq
Q^*(h,a) = r(ha) + \g P(Y|ha)
\eeq

%%%%%%%%%%%%%%%%%%%%%%%%%%%%%%%%%%%%%%%%%%%%%%%%%%%%%%%%%%%%%%%
\section{Aggregation Example}\label{sec:aggregation-example}
%%%%%%%%%%%%%%%%%%%%%%%%%%%%%%%%%%%%%%%%%%%%%%%%%%%%%%%%%%%%%%%

\beq
\int_{R} B(\to\ta|sb) = 
\begin{cases}
2/3 & s=s_0, b=b_0, R=R_{4a} \\
1/3 & s=s_0, b=b_0, R=R_{4b} \\
1/2 & s=s_0, b=b_1, R=R_{3a} \\
1/2 & s=s_0, b=b_1, R=R_{3b} \\
1   & s=s_1, b=b_0, R=R_1 \\
1/2 & s=s_1, b=b_1, R=R_{2a} \\
1/2 & s=s_1, b=b_1, R=R_{2b} \\
0	& \text{otherwise.} 
\end{cases}
\eeq

\beq
P_\psi(s|oa \in R) =
\begin{cases}
1 & s=s_0, R=R_{2a} \\
1 & s=s_0, R=R_{2b} \\
1 & s=s_0, R=R_{3a} \\
1 & s=s_0, R=R_{3b} \\
1/2 & s=s_0, R=R_{4b} \\
1 & s=s_1, R=R_{1} \\
1 & s=s_1, R=R_{4a} \\
1/2 & s=s_1, R=R_{4b} \\
0 & \text{otherwise.}
\end{cases}
\eeq

\beq
p(\ps|sb) = \int P_\psi(\ps|\to\ta)B(\to\ta|sb)
\eeq

\beq
p(s_0|sb) = \int_{R_{2a}} B(\to\ta|sb) + \int_{R_{2b}} B(\to\ta|sb) + \int_{R_{3a}} B(\to\ta|sb) + \int_{R_{3b}} B(\to\ta|sb) + \frac{1}{2}\int_{R_{4b}} B(\to\ta|sb)
\eeq

\beq
p(s_1|sb) = \int_{R_{1}} B(\to\ta|sb) + \int_{R_{4a}} B(\to\ta|sb) + \frac{1}{2}\int_{R_{4b}} B(\to\ta|sb)
\eeq

\beq
p(s_0|sb) = \begin{bmatrix}
1/6 & 1 \\
0 & 1
\end{bmatrix}
\eeq

\beq
p(s_1|sb) = \begin{bmatrix}
5/6 & 0 \\
1 & 0
\end{bmatrix}
\eeq

\beq
p(s_00|sb) = B_{R_{2a}} + B_{R_{2b}} + B_{R_{3a}} + B_{R_{4b}}
\eeq

\beq
p(s_10|sb) = B_{R_{1}} + B_{R_{4b}}
\eeq

\beq
p(s_0\g|sb) = 0
\eeq

\beq
p(s_1\g|sb) = B_{R_{4a}}
\eeq

\beq
p(s_0\eps|sb) = B_{R_{3b}}
\eeq

\beq
p(s_1\eps|sb) = 0
\eeq

Since policy differs in region $R_{2a}$ and $R_{2b}$ by $\eps^\prime$. Therefore, lets take $\pi(b|s) = 1/2$ for all $s$ and $b$.
\beq
q^\pi(0,0) = 1/3(2\g + \eps) + \frac{\g}{2} ( 1/6 q^\pi(0,0) + 1/6 q^\pi(0,1) + 5/6 q(1,0) + 5/6 q(1,1) )
\eeq
\beq
q^\pi(0,1) = \eps/2 + \frac{\g}{2} \sum_{\ps,\tb} p(\ps | 01) q^\pi(\ps,\tb)
\eeq
\beq
q^\pi(1,0) = \frac{\g}{2} \sum_{\ps,\tb} p(\ps | 10) q^\pi(\ps,\tb)
\eeq
\beq
q^\pi(1,1) = \frac{\g}{2} \sum_{\ps,\tb} p(\ps | 11) q^\pi(\ps,\tb)
\eeq

\fi

\fi
\end{document}
\fi

%--------------------End-of-<FileName>.tex--------------------%